\newif\iffull
\newcommand*{\citet}[1]{\AtNextCite{\AtEachCitekey{\defcounter{maxnames}{2}}} \textcite{#1}}
\newcommand*{\citep}[1]{\cite{#1}}
\newcommand{\citeyearpar}[1]{\cite{#1}}
\newcommand{\inner}[1]{\left\langle #1 \right\rangle}
\newcommand{\STAT}{\mbox{STAT}}
\newcommand{\mc}{\mathsf{MC}}
\newcommand{\sign}{\mathtt{sign}}
\newcommand{\LR}{\mathrm{LR}}
\newcommand{\COMM}{\mathrm{COMM}}
\newcommand{\zod}{\zo^d}
\title{Locally Private Learning without Interaction Requires Separation}
\author{Amit Daniely \\ Hebrew University and Google Research \and Vitaly Feldman\thanks{Part of this work was done while the author was visiting the Simons Institute for the Theory of Computing.} \\ Google Research}
\author{Amit Daniely \\ Hebrew University and Google Research \And Vitaly Feldman\thanks{Part of this work was done while the author was visiting the Simons Institute for the Theory of Computing.} \\ Google Research}
\begin{document}
\date{}
\maketitle

\begin{abstract}%
We consider learning under the constraint of local differential privacy (LDP). For many learning problems known efficient algorithms in this model require many rounds of communication between the server and the clients holding the data points. Yet multi-round protocols are prohibitively slow in practice due to network latency and, as a result, currently deployed large-scale systems are limited to a single round. Despite significant research interest, very little is known about which learning problems can be solved by such non-interactive systems. The only lower bound we are aware of is for PAC learning an artificial class of functions with respect to a uniform distribution \citep{KasiviswanathanLNRS11}.

We show that the margin complexity of a class of Boolean functions is a lower bound on the complexity of any non-interactive LDP algorithm for distribution-independent PAC learning of the class. In particular, the classes of linear separators and decision lists require exponential number of samples to learn non-interactively even though they can be learned in polynomial time by an interactive LDP algorithm. This gives the first example of a natural problem that is significantly harder to solve without interaction and also resolves an open problem of \citet{KasiviswanathanLNRS11}. We complement this lower bound with a new efficient learning algorithm whose complexity is polynomial in the margin complexity of the class. Our algorithm is non-interactive on labeled samples but still needs interactive access to unlabeled samples. All of our results also apply to the statistical query model and any model in which the number of bits communicated about each data point is constrained.
\end{abstract}


\section{Overview}
\iffull\else\footnotetext{Extended abstract. Full version appears as \citep{DanielyF18}.}\fi
We consider learning in distributed systems where each client $i$ (or user) holds a data point $z_i \in Z$ drawn i.i.d. from some unknown distribution $P$ and the goal of the server is to solve some statistical learning problem using the data stored at the clients. In addition, the communication from the client to the server is constrained. The primary model we consider is that of local differential privacy (LDP) \citep{KasiviswanathanLNRS11}. In this model each user $i$ applies a differentially-private algorithm to their point $z_i$ and then sends the result to the server. The specific algorithm applied by each user is determined by the server. In the general version of the model the server can determine which  algorithm the user should apply on the basis of all the previous communications the server has received. In practice, however waiting for the client's response often takes a relatively large amount of time. Therefore in such systems it is necessary to limit the number of rounds of interaction. That is, the queries of the server need to be split into a small number of batches such that the LDP algorithms used in each batch depend only on responses to queries in previous batches (a query specifies the algorithm to apply). Indeed, currently deployed systems that use local differential privacy use very few rounds (usually just one) \citep{ErlingssonPK14,appledp,DKY17-Microsoft}. See Section \ref{sec:prelims} for a formal definition of the model.

In this paper we will focus on the standard PAC learning of a class of Boolean functions $C$ over some domain $X$. In this setting the input distribution $P$ is over labeled examples $(x,y) \in X \times \on$ where $x$ is drawn from some distribution $D$ and $y = f(x)$ for some unknown $f\in C$ (referred to as the target function). The goal of the learning algorithm is to output a function $h$ such that the error $\pr_{x\sim D}[f(x) \neq h(x)]$ is small. In the distribution-independent setting $D$ is not known to the learning algorithm while in the distribution-specific setting the learning algorithm only needs to succeed for some specific $D$.

For many of the important classes of functions all known LDP learning algorithms require many rounds of interaction. Yet there are no results that rule out solving these problems without interaction. This problem was first addressed by \citet{KasiviswanathanLNRS11} who demonstrated existence of an artificial class of Boolean functions $C$ over $\zo^d$ with the following property. $C$ can be PAC learned efficiently relative to the uniform distribution over $\zod$ by an interactive LDP protocol but requires $2^{\Omega(d)}$ samples to learn by any non-interactive learning algorithm.  The class $C$ is highly unnatural. It splits the domain into two parts. Target function learned on the first half gives the key to the learning problem on the second half of the domain. That problem is exponentially hard to solve without the key. This approach does not extend to distribution-independent learning setting (intuitively, the learning algorithm will not be able to obtain the key if the distribution does not place any probability on the first half of the domain).

Deriving a technique that applies to distribution independent learning is posed as a natural open problem in this area \citep{KasiviswanathanLNRS11}. Even beyond PAC learning, there are no examples of natural problems that provably require exponentially more samples to solve non-interactively.

\subsection{Our results}
We give a new technique for proving lower bounds on the power of non-interactive LDP algorithms for distribution-independent PAC learning. Our technique is based on a connection between the power of interaction and margin complexity of Boolean function classes that we establish. The margin complexity of a class of Boolean functions $C$, denoted by $\mc(C)$, is the inverse of the largest margin of separation achievable by an embedding of $X$ in $\R^d$ that makes the positive and negative examples of each function in $C$ linearly separable (see Definition \ref{def:mc}). It is a well-studied measure of complexity of classes of functions and corresponding sign matrices in learning theory and communication complexity (\eg \citep{Novikoff:62,AizermanBR67,BoserGV92,ForsterSS01,Ben-DavidES02,Sherstov:08,LinialS:2009,KallweitSimon:11}).

We prove that only classes that have polynomially small {\em margin complexity} can be efficiently PAC learned by a non-interactive LDP algorithm.
Our lower bound implies that two natural and well-studied classes of functions: linear separators and decision lists require an exponential number of samples to learn non-interactively. Importantly, it is known that these classes can be learned efficiently by interactive LDP algorithms (this follows from the results for the statistical query model that we discuss later). Thus our result gives an exponential separation between the power of interactive and non-interactive protocols. To the best of our knowledge this is the only known such separation for a natural statistical problem (see Section \ref{sec:related} for a more detailed comparison with related notions of non-interactive algorithms).

Our result follows from a stronger lower bound that also holds against algorithms for which only the queries that depend on the label of the point are non-interactive (also referred to as {\em non-adaptive} in related contexts). We will refer to such algorithms as {\em label-non-adaptive} LDP algorithms.
Formally, our lower bounds for such algorithms is as follows. We say that a class of Boolean ($\on$-valued) functions $C$ is closed under negation if for every $f\in C$, $-f \in C$.
\begin{thm}
\label{thm:main-intro}
Let $C$ be a class of Boolean functions closed under negation. Assume that there exists a label-non-adaptive $\eps$-LDP algorithm $\A$ that, with success probability at least $2/3$, PAC learns $C$ distribution-independently with error less than $1/2$ using at most $n$ examples. Then $n = \Omega(\mc(C)^{2/3}/e^\eps)$.
\end{thm}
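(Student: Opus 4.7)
The plan is to prove the theorem in three stages: (i) reduce any label-non-adaptive $\eps$-LDP protocol to a non-adaptive \emph{correlation statistical query} algorithm, (ii) extract from any such learner a short linear combination of fixed features that correlates with the target, and (iii) convert the resulting average-case correlation into the pointwise guarantee required by the definition of $\mc(C)$.

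For step (i), I would invoke the standard simulation of $\eps$-LDP by statistical queries \citep{KasiviswanathanLNRS11}. Each per-sample LDP randomizer has sensitivity at most $e^\eps$, so for $n$ independent samples its empirical output is, up to $O(e^\eps/\sqrt{n})$ additive error, an affine functional of the input distribution $P$. Writing this functional in terms of $y\in\{-1,+1\}$, one can decompose each query into a label-free marginal query and a correlation query of the form $\E_{(x,y)\sim P}[\phi_i(x)\,y]$. Label-non-adaptivity guarantees that the $\phi_i$'s can be fixed in advance of seeing any labels, although they may depend on the marginal $D$ over $X$. Thus the whole LDP protocol reduces to an algorithm that chooses $N = \mathrm{poly}(n,e^\eps)$ maps $\phi_1,\dots,\phi_N : X \to [-1,1]$ as a function of $D$ alone, receives estimates of $q_i(f) = \E_D[\phi_i(x) f(x)]$ to accuracy $\tau = O(e^\eps/\sqrt{n})$, and then outputs a hypothesis $h$ with $\pr_D[h(x)\neq f(x)] < 1/2$.

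For step (ii), I would exploit the success guarantee and closure of $C$ under negation to produce, for each $f\in C$, a weight vector $w_f\in\reals^N$ such that the linear function $g_f(x) = \sum_i w_{f,i}\,\phi_i(x)$ correlates nontrivially with $f$: namely $\E_D[g_f(x)f(x)] \ge \Omega(\tau/\sqrt{N})$. The point is that the learner must map $q(f)$ and $q(-f) = -q(f)$ to different hypotheses, so the deterministic hypothesis map must have a sizable gradient along $q(f)$; a randomized-rounding or projection argument then converts that gradient into a bounded-norm combination of the $\phi_i$'s whose correlation with $f$ is controlled by $\tau$ and $N$.

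For step (iii), where the $2/3$ exponent arises, I would lift this average-case correlation into a pointwise margin witnessing $\mc(C)$. Distribution-independence allows me to specialize, for each target point $x_0\in X$, a distribution $D_{x_0}$ concentrating nearly all its mass at $x_0$, so the correlation under $D_{x_0}$ essentially reads off $g_f(x_0)f(x_0)$. Amalgamating the features produced across all such $D_{x_0}$ yields a single global embedding $\psi:X\to\reals^M$ and weights $\{w_f\}_{f\in C}$ realizing $f(x)\,\langle w_f, \psi(x)\rangle \ge 1/\mathrm{poly}(N)$ pointwise after normalization. This bounds $\mc(C)$ by a polynomial in $N/\tau \sim (ne^\eps)^{3/2}$ and rearranges to $n = \Omega(\mc(C)^{2/3}/e^\eps)$. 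The main obstacle is precisely this amalgamation step: because the features $\phi_i$ depend on $D$, controlling how many effectively distinct features arise across the $D_{x_0}$'s, and paying the union-bound cost to convert an average correlation into a uniform pointwise margin, is what incurs the $\sqrt{N}$-type blowup and ultimately dictates the exponent $2/3$ in the final bound.
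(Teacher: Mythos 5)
Your step (i) is essentially the paper's route: simulate the LDP protocol by statistical queries (Theorem \ref{thm:LDP-2-SQ}) and decompose each query as $\phi(x,y)=g(x)+y\,h(x)$ with $g,h$ the label-independent and correlational parts. But note one error there: label-non-adaptivity means the label-dependent queries are fixed before \emph{any} oracle responses arrive, so the $h_i$ cannot depend on the marginal $D$ (the algorithm only learns about $D$ through responses); the ``amalgamation over $D_{x_0}$'' difficulty you flag in step (iii) is therefore self-inflicted. In step (ii), the ``sizable gradient of the hypothesis map'' plus ``randomized rounding'' sketch is not a proof and is also not needed. The clean argument is adversarial: define a valid $\STAT(1/m)$ oracle that answers $\E_D[g(x)]$ whenever $|\E_D[f(x)h(x)]|<1/m$ and exactly otherwise. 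If \emph{every} label-dependent query of $\A(r)$ has correlation below $1/m$, the oracle's transcript is identical for $f$ and $-f$, so the single output hypothesis has error $<1/2$ for at most one of them, contradicting the success guarantee for the negation-closed class. This gives directly that with probability at least $1/3$ over the algorithm's randomness some \emph{single} fixed $h_i$ satisfies $|\E_D[f h_i]|\geq 1/m$ --- no linear combination and no $\sqrt{N}$ loss.

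The genuine gap is step (iii). Converting the statement ``there is a (randomized) set of $m$ fixed functions such that for every $f\in C$ and every distribution $D$, with probability $\beta$ one of them is $1/m$-correlated with $f$ under $D$'' into the bound $\mc(C)\leq \frac{2}{\beta}m^{3/2}$ is precisely Lemma \ref{lem:csq2mc}, a nontrivial result of Feldman and Kallweit--Simon proved by a minimax/boosting-type argument that crucially uses the guarantee over \emph{all} distributions simultaneously. Specializing to point masses $D_{x_0}$ only tells you that for each $x_0$ some index $i$ has $|h_i(x_0)|\geq 1/m$; it gives no control over signs, over which index works at which point, or over the norm of any combined weight vector, and no union bound over $x_0$ recovers a single bounded-norm $w_f$ with a uniform pointwise margin. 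Relatedly, the exponent $2/3$ in the theorem does not arise from an amalgamation or union-bound cost as you suggest; it is exactly the $m^{3/2}$ in that lemma. Without invoking (or reproving) Lemma \ref{lem:csq2mc}, your argument does not close.
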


Our second contribution is an algorithm for learning large-margin linear separators that matches (up to polynomial factors) our lower bound.
\begin{thm}
\label{thm:algorithm}
Let $C$ be an arbitrary class of Boolean functions over $X$. For any $\alpha,\eps > 0$ and $n=\poly\left(\mc(C)/(\alpha\eps)\right)$ there is a label-non-adaptive $\eps$-LDP algorithm that PAC learns $C$ distribution-independently with accuracy $1-\alpha$ using at most $n$ examples.
\end{thm}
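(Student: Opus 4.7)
The proof plan is to reduce distribution-free PAC learning of $C$ to learning $\gamma$-margin halfspaces for $\gamma := 1/\mc(C)$, and to implement that reduction so that all label-dependent queries fire in a single batched round, while the adaptive budget is spent only on label-free preprocessing.

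By the definition of $\mc(C)$ there exist an embedding $\phi:X\to H$ into the unit ball of a Hilbert space $H$ and, for every $f\in C$, a unit vector $u_f\in H$ with $y\langle u_f,\phi(x)\rangle \ge \gamma$ whenever $y=f(x)$. Since $\phi$ depends only on the class $C$, it is public and each client evaluates $\phi(x_i)$ locally, which reduces the problem to PAC learning $\gamma$-margin halfspaces in $H$ under a label-non-adaptive LDP protocol.

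The unlabeled, interactive stage publishes two objects built entirely from label-free data: a random Johnson--Lindenstrauss sketch $\Pi:H\to\reals^m$ of dimension $m = \poly(1/\gamma,\log(1/\alpha))$, whose seed is broadcast to the clients, chosen large enough that $\Pi$ preserves $\gamma$-margins up to a constant factor on a $(1-\alpha/4)$-mass of $D$; and an LDP spectral estimate $\widehat\Sigma$ of $\Sigma := \mathbb{E}_D[\Pi\phi(x)\Pi\phi(x)^\top]$ to accuracy $O(\gamma^2)$, obtained from interactive LDP covariance estimation. In the single subsequent labeled round, every client applies the same predetermined $\eps$-LDP vector-mean randomizer to $y_i\,\Pi\phi(x_i)\in\reals^m$ and the server averages to recover $\widehat\mu\approx\mu := \mathbb{E}_D[y\,\Pi\phi(x)]$; the margin property forces $\langle u_f,\mu\rangle \ge \gamma$, so $\widehat\mu$ already carries strong signal about the target. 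The server then forms the regularized Fisher direction $\hat w := (\widehat\Sigma + \lambda I)^{-1}\widehat\mu$ with $\lambda \asymp \gamma^2$ and outputs $h(x) := \sign(\langle \hat w,\Pi\phi(x)\rangle)$. Standard LDP mean- and covariance-estimation bounds in $\reals^m$ give $n = \poly(m,1/\gamma,1/\alpha,1/\eps) = \poly(\mc(C)/(\alpha\eps))$.

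The main obstacle is the correctness of this one-shot hypothesis: showing that $\sign(\langle (\Sigma+\lambda I)^{-1}\mu,\cdot\rangle)$ classifies the target on a $(1-\alpha)$-mass of \emph{every} distribution carrying $\gamma$-margin labels, since distribution-independence forbids us from appealing to any special shape of $D$. I plan to close this via a margin-to-risk conversion for the $\lambda$-regularized squared loss---whose minimizer is exactly $(\Sigma + \lambda I)^{-1}\mu$---together with the standard fact that this minimizer has classification error $O(\lambda/\gamma^2)$ for $\gamma$-margin labels. If the fully distribution-free conversion turns out to be too lossy, the fallback is to replace the closed-form hypothesis by a parallel/smooth-boosting scheme in which the server precomputes $T = O(\log(1/\alpha)/\gamma^2)$ $x$-only reweightings from the unlabeled data and fires the corresponding $T$ weak-learner label queries simultaneously as one batched round, preserving the overall $\poly(\mc(C)/(\alpha\eps))$ sample bound.
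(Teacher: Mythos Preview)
Your high-level architecture matches the paper's: reduce to a $\gamma$-margin halfspace problem via the embedding in the definition of $\mc(C)$, apply Johnson--Lindenstrauss to land in dimension $\poly(1/\gamma)$, and observe that the only label-dependent statistic one ever needs is the vector $\mu=\E_P[\ell\,\Pi\phi(x)]$, which can be estimated in a single non-adaptive batch, while all remaining work is adaptive but label-free. Where you and the paper diverge is in \emph{what to do with $\mu$}, and that is where your proposal has a genuine gap.

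Your main route, outputting the ridge direction $\hat w=(\hat\Sigma+\lambda I)^{-1}\hat\mu$, has exactly the hole you flag and do not close: there is no distribution-free guarantee that the minimizer of the $\lambda$-regularized squared loss classifies $\gamma$-margin data with small error. The natural comparator $u_f$ only satisfies $\E[(y-\langle u_f,x\rangle)^2]\le(1-\gamma)^2$, which is not small, so optimality of $w_\lambda$ for this loss yields no useful 0--1 bound; the claimed ``classification error $O(\lambda/\gamma^2)$'' is not a standard fact and does not follow from the margin assumption alone. Your fallback, precomputing $x$-only boosting reweightings from unlabeled data, does not work either: in every boosting scheme (including smooth boosting) the round-$t$ weight of $(x,y)$ depends on whether the previous weak hypotheses were correct on that example, hence on $y$. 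The reweighting sequence cannot be fixed from unlabeled data.

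The paper closes this gap with a different convex objective. Over $\|w\|\le 1$ it minimizes
\[
F(w)=\E_{(x,\ell)\sim P}\Bigl[\sum_{i=1}^d\bigl(|\langle w+\gamma e_i,x\rangle|-\langle w+\gamma e_i,\ell x\rangle\bigr)+\sum_{i=1}^d\bigl(|\langle w-\gamma e_i,x\rangle|-\langle w-\gamma e_i,\ell x\rangle\bigr)\Bigr].
\]
Each summand equals $2\max(0,-\ell\langle w\pm\gamma e_i,x\rangle)\ge 0$; the margin forces $F(w^*)=0$, and a short Markov argument shows any $w$ with $F(w)\lesssim \alpha\gamma^2/\sqrt d$ has 0--1 error at most $\alpha$. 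The reason for this particular $F$ is that its subgradient splits as $\nabla F(w)=\nabla F_1(w)+\nabla F_2$ with
\[
\nabla F_1(w)=\E\Bigl[\sum_i\bigl(\sign\langle w+\gamma e_i,x\rangle+\sign\langle w-\gamma e_i,x\rangle\bigr)x\Bigr],\qquad \nabla F_2=-2d\,\E[\ell x].
\]
The first term is label-free and may be queried adaptively; the second is label-dependent but \emph{independent of $w$}, so it is computed once, non-adaptively, and reused in every gradient step. Projected subgradient descent on $F$ then gives the algorithm. In other words, your $\hat\mu$ is exactly the single labeled quantity the paper needs, but it is fed into gradient descent on this tailored $F$ rather than into a closed-form ridge solution whose correctness is unproven.
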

Learning of large-margin classifiers is a classical learning problem and various algorithms for the problem are widely used in practice.
Our learning algorithm is computationally efficient as long as an embedding of $C$ into a $d=\poly\left(\mc(C)\log|X|\right)$-dimensional space can be computed efficiently (such an embedding is known to exists by the Johnson-Lindenstrauss random projection argument \citep{ArriagaVempala:99}).
Together these results show an equivalence (up to polynomials) between margin complexity and PAC learning with this limited form of interaction in the LDP model.

Another implication of Theorem \ref{thm:algorithm} is that if the distribution over $X$ is fixed (and known to the learning algorithm) then the learning algorithm becomes non-interactive.
\begin{cor}
\label{cor:algorithm-fixed-d}
Let $C$ be a class of Boolean functions over $X$ and $D$ be an arbitrary distribution over $X$. For any $\alpha,\eps > 0$ and $n=\poly\left(\mc(C)/(\alpha\eps)\right)$ there is a non-interactive $\eps$-LDP algorithm that PAC learns $C$  relative to $D$  with accuracy $1-\alpha$ using at most $n$ examples.
\end{cor}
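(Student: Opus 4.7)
The plan is to exploit the fact that Theorem~\ref{thm:algorithm} already delivers a label-non-adaptive $\eps$-LDP algorithm $\mathcal{A}$: by the definition of label-non-adaptive, $\mathcal{A}$ only ever needs to adapt the queries sent to clients based on information that does not depend on the labels. So the only remaining source of interaction in $\mathcal{A}$ is in queries that depend on the unlabeled component of each sample, i.e., queries that compute information about the marginal $D$ on $X$. The intuition is that when $D$ is handed to the learner for free, those queries no longer need to go out to clients at all; they can be executed internally by the server.

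My first step would be to inspect $\mathcal{A}$ and separate the queries it issues to clients into two groups: (i) queries whose output distribution on a sample $(x,y)$ depends only on $x$, and (ii) queries whose output depends on the label $y$ as well. Label-non-adaptiveness guarantees that the group-(ii) queries can be scheduled in advance, independent of any responses. The group-(i) queries may still be adaptive, but each of them is a randomized function of $x\sim D$ alone. Since $D$ is available to the server, the server can simulate every group-(i) query locally by drawing its own fresh samples from $D$ and running the corresponding randomizer on them. No user data is touched in the simulation, so the $\eps$-LDP guarantee is unaffected, and the joint distribution of the simulated responses is identical to what genuine clients would have produced.

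After this replacement, the only messages actually sent between server and clients are the group-(ii) queries, which are non-adaptive and so can all be dispatched in a single batch; this is exactly the definition of a non-interactive $\eps$-LDP protocol. The sample complexity $n=\poly(\mc(C)/(\alpha\eps))$ and accuracy guarantee from Theorem~\ref{thm:algorithm} transfer directly, since the labeled examples are used in the same way. The only point that needs a bit of care is checking that the local simulation of group-(i) queries is statistically indistinguishable from executing them on real clients; this reduces to observing that the clients' $x$-marginals are exactly $D$ by assumption and that the LDP randomizers are oblivious to the origin of their inputs, so the argument is essentially bookkeeping once the two query groups have been identified.
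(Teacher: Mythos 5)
Your proposal is correct and matches the paper's intended argument: the corollary is stated as a direct implication of Theorem~\ref{thm:algorithm}, precisely because the only adaptive queries of the label-non-adaptive algorithm are label-independent, and when $D$ is known the server can answer those itself (indeed it can compute the relevant expectations over $D$ exactly, so even the sampling-based simulation you describe is more than needed), leaving only the non-adaptive label-dependent queries to be sent to clients in a single batch.
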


\paragraph{Techniques:}
Following the approach of \citet{KasiviswanathanLNRS11}, we use the characterization of LDP protocols using the {\em statistical query} (SQ) model of \citet{Kearns:98}. In this model an algorithm has access to a statistical query oracle for $P$ in place of i.i.d.~samples from $P$. The most commonly studied SQ oracle give an estimate of the mean of any bounded function with fixed tolerance.
\begin{defn}
\label{def:stat}
 Let $P$ be a distribution over a domain $Z$ and $\tau >0$. A statistical query oracle $\STAT_P(\tau)$ is an oracle that given as input any function $\phi \colon Z \to [-1,1]$, returns some value $v$ such that
 $|v - \E_{z\sim P}[\phi(z)]| \leq \tau$.
\end{defn}
Tolerance $\tau$ of statistical queries roughly corresponds to the number of random samples in the traditional setting. Non-adaptive (or non-interactive) SQ algorithms are defined analogously to LDP protocols.
The reductions between learning in the SQ model and learning in the LDP model given by \citet{KasiviswanathanLNRS11} preserve the number of rounds of interaction of a learning algorithm. 

The key technical tool we apply to prove our lower bound is a result of \citet{Feldman:08ev} relating margin complexity and a certain notion of complexity for statistical queries. The result shows that the existence of a (possibly randomized) algorithm that outputs a set $T$ of $m$ functions such that for every $f\in C$ and distribution $D$, with significant probability one of the functions in $T$ is at least $1/m$-correlated with $f$ relative to $D$ implies that $\mc(C) = O(m^{3/2}))$ (the sharpest bound was proved in \citep{KallweitSimon:11}). We then show that such a set of functions can be easily extracted from the queries of any label-non-adaptive SQ algorithm for learning $C$.

Our label-non-adaptive LDP learning algorithm for large-margin halfspaces relies on a new formulation of halfspace learning as a stochastic convex optimization problem. The crucial property of this program is that (approximately) computing sub-gradients can be done by using a fixed set of non-adaptive queries (that measure the correlation of each of the attributes with the label) and (adaptive) but label-independent queries. We can then use an arbitrary gradient-descent-based LDP algorithm for stochastic convex optimization. Such algorithms were first described by \citet{DuchiJW:13focs}. For simplicity, we appeal to the fact that such algorithms can also be implemented in the statistical query model \citep{FeldmanGV:15}.

\paragraph{Corollaries:}
The class of decision lists (see \citep{Rivest:87,KearnsVazirani:94} for a definition) and the class of linear separators (or halfspaces) over $\zod$ are known to have exponentially large margin complexity \citep{GHR:92,BuhrmanVW:07,Sherstov:08} (and are also negation closed). In contrast, these classes are known to be learnable efficiently by SQ algorithms \citep{Kearns:98,DunaganVempala:04} and thus also by LDP algorithms.
Formally, we obtain the following lower bounds:
\begin{cor}
\label{cor:ldp-hsdl-intro}
Any label-non-adaptive $\eps$-LPD algorithm that PAC learns the class of linear separators over $\zod$ with error less than $1/2$ and success  probability at least $3/4$ must use $n = 2^{\Omega(d)}/e^\eps$ i.i.d.~examples. For learning the class of decision lists under the same conditions the algorithm must use $n = 2^{\Omega(d^{1/3})}/e^{\eps}$ i.i.d.~examples.
\end{cor}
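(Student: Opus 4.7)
The plan is to apply Theorem \ref{thm:main-intro} to each of the two classes separately. The theorem's hypotheses are closure under negation and a lower bound on margin complexity; its conclusion is $n = \Omega(\mc(C)^{2/3}/e^\eps)$. The corollary's $3/4$ success probability comfortably exceeds the $2/3$ threshold required by the theorem, so no amplification argument is needed and the reduction to the main theorem is direct.

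First I would verify closure under negation. For the class of halfspaces, the negation of $x \mapsto \sign(\langle w, x\rangle - b)$ is $x \mapsto \sign(\langle -w, x\rangle + b)$, which is itself a halfspace; so the class equals its own negation. For the class of decision lists, negating the output label attached to every rule of the list yields another decision list that computes the negation of the original function; so this class also coincides with its negation.

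Next I would appeal to the exponential margin complexity lower bounds cited in the paragraph preceding the corollary. The class of linear separators over $\{0,1\}^d$ satisfies $\mc = 2^{\Omega(d)}$ via the classical result of \citet{ForsterSS01}, together with the strengthenings of \citep{BuhrmanVW:07, Sherstov:08}. Decision lists over $\{0,1\}^d$ admit an analogous exponential lower bound on $\mc$ from the same line of work \citep{BuhrmanVW:07, Sherstov:08}, strong enough that after raising to the $2/3$ power the remaining dependence on $d$ is still at least $2^{\Omega(d^{1/3})}$.

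Plugging each of these margin complexity lower bounds into the conclusion $n = \Omega(\mc(C)^{2/3}/e^\eps)$ of Theorem \ref{thm:main-intro} immediately yields the two sample complexity lower bounds stated in the corollary. I do not expect any substantive obstacle: the argument is a direct composition of the main theorem with pre-existing margin complexity results, and the only care required is in organizing the references and checking that each cited margin lower bound applies to the full negation-closed class. This last point is automatic, since any margin representation of a class $C$ gives a representation of $-C$ of the same margin by flipping signs on one side of the embedding, so the margin complexity of $C$ and of $C \cup (-C)$ coincide up to constants.
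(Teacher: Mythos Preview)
Your proposal is correct and follows essentially the same route as the paper: verify closure under negation, invoke the known margin-complexity lower bounds, and plug into Theorem~\ref{thm:main-intro}. The only minor discrepancy is bibliographic---the paper attributes $\mc(C_{HS}) = 2^{\Omega(d)}$ to \citet{GHR:92} (via the connection in \citep{Sherstov:08}) rather than to \citet{ForsterSS01}, and for decision lists it cites the precise bound $\mc(C_{DL}) = 2^{\Omega(d^{1/3})}$ from \citet{BuhrmanVW:07} directly, so the $2/3$ exponent is simply absorbed into the $\Omega$.
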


Our use of the statistical query model to prove the results implies that we can derive the analogues of our results in other models that have connections to the SQ model. One of such models is the distributed model in which only a small number of bits is communicated from each client. Namely, each client applies a function with range $\zo^k$ to their input and sends the result to the server (for some $k \ll \log |Z|$). As in the case of LDP, the specific function used is chosen by the server. One motivation for this model is collection of data from remote sensors where the cost of communication is highly asymmetric. In the context of learning this model was introduced by \citet{Ben-DavidD98} and generalized by \citet{SteinhardtVW16}. Identical and closely related models are often studied in the context of distributed statistical estimation with communication constraints  (\eg \citep{luo2005universal,rajagopal2006universal,ribeiro2006bandwidth,ZhangDJW13,SteinhardtD15,suresh2016distributed,acharya2018inference}).
As in the setting of LDP, the number of rounds of interaction that the server uses to solve a learning problem in this model is a critical resource. Using the equivalence between this model and SQ learning that preserves the number of rounds of interaction we immediately obtain analogous results for this model. We are not aware of any prior results on the power of interaction in the context of this model. See Section~\ref{sec:low-comm} for additional details.

\subsection{Related work}
\label{sec:related}
\citet{SmithTU17} address the question of the power of non-interactive LDP algorithms in the closely related setting of stochastic convex optimization. They derive new non-interactive LDP algorithms for the problem albeit requiring an exponential in the dimension number of queries. They also give an exponential lower bound for non-interactive algorithms that are further restricted to obtain only local information about the optimized function. Subsequently, upper and lower bounds on the number of queries to the gradient/second-order oracles for algorithms with few rounds of interaction have been studied by several groups \citep{DuchiRY18,WoodworthWSMS18,BalkanskiSinger18,DiakonikolasGuzman18}. In the context of discrete optimization from queries for the value of the optimized function the round complexity has been recently investigated in \citep{BalkanskiRS17,BalkanskiS18,BalkanskiRS19}. To the best of our knowledge, the techniques used in these works are unrelated to ours. Also in all these works the lower bounds rely heavily on the fact that the oracle provides only local (in the geometric sense) information about the optimized function. In contrast, statistical queries allow getting global information about the optimized function.

A number of lower bounds on the sample complexity of LDP algorithms demonstrate that LDP is less efficient than the central model of differential privacy (\eg \cite{DuchiWJ13:nips,duchi2019lower}). The number of data samples necessary to answer statistical queries chosen adaptively has recently been studied in a line of work on adaptive data analysis  \citep{DworkFHPRR14:arxiv,HardtU14,BassilyNSSSU16,SteinkeU15}. Our work provably demonstrates that the use of such adaptive queries is important for solving basic learning problems.

\iffull
Margin complexity also plays a key role in separation of the power of correlational statistical query algorithms (CSQ) from general SQ algorithms. A statistical query $\phi$ is {\em correlational} if $\phi(x,\ell) = \ell \cdot \psi(x)$ for some function $\psi \colon X\to [-1,1]$. Such queries allow measuring the correlation between the target function and an arbitrary predictor. CSQ algorithm are known to capture the power of learning algorithm in Valiant's \citeyearpar{Valiant:09} model of evolvability \citep{Feldman:08ev}. In this context it was shown in \citep{Feldman:08ev} that the complexity of {\em weak} and distribution-independent learning in this model is exactly characterized by margin complexity. Our work relies on some of the properties of margin complexity given in that work. At the same time these are incomparable restrictions on the power of algorithms and our lower bound is incomparable to the lower bound for CSQ algorithms. For example, Boolean conjunctions are (strongly) PAC learnable distribution independently and efficiently by a non-interactive SQ algorithm \citep{Kearns:98} but not by CSQ algorithms \citep{Feldman:11ltfarx}. On the other hand, the function class in \citep{KasiviswanathanLNRS11} is PAC learnable by a CSQ algorithm relative to the uniform distribution but not by a non-interactive SQ algorithm.
\fi

\paragraph{Subsequent work:} \citet{acharya2018inference} implicitly give a separation between interactive and non-interactive protocols for the problem of identity testing for a discrete distribution over $k$ elements, albeit a relatively weak one ($O(k)$ vs $\Omega(k^{3/2})$ samples). The work of \citet{JosephMNR:19,joseph2019} explores a different aspect of interactivity in LDP. Specifically, they distinguish between two types of interactive protocols: fully-interactive and sequentially-interactive ones. Fully-interactive protocols place no restrictions on interaction whereas sequentially-interactive ones only allows asking one query per user. They give a separation showing that sequentially-interactive protocols may require exponentially more samples than fully interactive ones. This separation is orthogonal to ours since our lower bounds are against completely non-interactive protocols and we separate them from sequentially-interactive protocols.


\section{Preliminaries}
\label{sec:prelims}
For integer $n\geq 1$ let $[n]\doteq \{1,\ldots, n\}$.
\paragraph{Local differential privacy:}
In the local differential privacy (LDP) model \citep{Warner,EvfimievskiGS03,KasiviswanathanLNRS11} it is assumed that each data sample obtained by the server is randomized in a differentially private way. This is modeled by assuming that the server running the learning algorithm accesses the dataset via an oracle defined below.
\begin{defn}[\citep{KasiviswanathanLNRS11}]
An $\eps$-local randomizer $R:Z \rightarrow W$ is a randomized algorithm that satisfies $\forall z_1,z_2\in Z$ and $w\in W$,
$\pr[R(z_1) = w] \leq e^\eps \pr[R(z_2) = w]$. For a dataset $S \in Z^n$, an $\LR_S$ oracle takes as an input an index $i$ and a local randomizer $R$ and outputs a random value $w$ obtained by applying $R(z_i)$. An algorithm is (compositionally) $\eps$-LDP if it accesses $S$ only via the $\LR_S$ oracle with the following restriction: for all $i\in [n]$, if $\LR_S(i,R_1),\ldots,\LR_S(i,R_k)$ are the algorithm's invocations of $\LR_S$ on index $i$ where each $R_j$ is an $\eps_j$-randomizer then $\sum_{j\in [k]} \eps_j \leq \eps$.
\end{defn}
For a non-interactive LDP algorithm one can assume without loss of generality that each sample is queried only once since the application of $k$ fixed local randomizers can be equivalently seen as an execution of a single $\eps$-randomizer with $\sum_{j\in [k]} \eps_j \leq \eps$.  Further, in this definition the privacy parameter is defined as the composition of the privacy parameters of all the randomizers. A more general (and less strict) way to define the privacy parameter of an LDP protocol is as the differential privacy of the entire transcript of the protocol (see \citep{JosephMNR:19} for a more detailed discussion). This distinction does not affect our results since in our lower and upper bounds each sample is only queried once. For such protocols these two ways to measure privacy coincide. The local model of privacy can be contrasted with the standard, or central, model of differential privacy where the entire dataset is held by the learning algorithm whose output needs to satisfy differential privacy \citep{DworkMNS:06}. This is a stronger model and an $\eps$-LPD algorithm also satisfies $\eps$-differential privacy.

\paragraph{Equivalence to statistical queries:}
The statistical query model of \citet{Kearns:98} is defined by having access to $\STAT_P(\tau)$ oracle, where $P$ is the unknown data distribution. To solve a learning problem in this model an algorithm needs to succeed for any valid (that is satisfying the guarantees on the tolerance) oracle's responses. In other words, the guarantees of the algorithm should hold in the worst case over the responses of the oracle. A randomized learning algorithm needs to succeed for any SQ oracle whose responses may depend on the all queries asked so far but not on the internal randomness of the learning algorithm.

A special case of statistical queries are counting or linear queries in which the distribution $P$ is uniform over the elements of a given database $S \in Z^n$. In other words the goal is to estimate the empirical mean of $\phi$ on the given set of data points. This setting is studied extensively in the literature on differential privacy (see \citep{DworkRoth:14} for an overview) and our discussion applies to this setting as well. 

For an algorithm in LDP and SQ  models we say that the algorithm is {\em non-interactive} (or {\em non-adaptive}) if all its queries are determined before observing any of the oracle's responses. Similarly, we say that the algorithm is {\em label-non-adaptive} if all the queries that depend on oracle's response are label-independent (the query function depends only on the point).

\citet{KasiviswanathanLNRS11} show that one can simulate $\STAT_P(\tau)$ oracle with success probability $1-\delta$ by an $\eps$-LDP algorithm using $\LR_S$ oracle for $S$ containing $n=O(\log(1/\delta)/(\eps\tau)^2)$ i.i.d.~samples from $P$. This has the following implication for simulating SQ algorithms.
\begin{thm}[\citep{KasiviswanathanLNRS11}]
\label{thm:sq-2-LDP}
Let $\A_{SQ}$ be an algorithm that makes at most $t$ queries to $\STAT_P(\tau)$. Then for every $\eps >0$ and $\delta >0$ there is an $\eps$-local algorithm $\A$ that uses $\LR_S$ oracle for $S$ containing $n\geq n_0=O(t\log(t/\delta)/(\eps\tau)^2)$ i.i.d.~samples from $P$ and produces the same output as $\A_{SQ}$ (for some valid answers of $\STAT_P(\tau)$) with probability at least $1-\delta$. Further, if $\A_{SQ}$ is non-interactive then $\A$ is non-interactive.
\end{thm}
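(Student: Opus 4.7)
The plan is to simulate each statistical query of $\A_{SQ}$ using a fresh, disjoint batch of samples, each processed by its own $\eps$-local randomizer whose debiased output is an unbiased estimator of $\phi(z_i)$ with bounded magnitude. Since every sample is touched by only a single $\eps$-randomizer, the compositional $\eps$-LDP requirement is trivially met on a per-sample basis. The server's proxy for the $\STAT_P(\tau)$ response is then the empirical mean of the randomizer outputs on that batch; Hoeffding concentration and a union bound over the $t$ queries will give $\tau$-accuracy with probability at least $1-\delta$ at the claimed sample budget.

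The randomizer $R_\phi$ for a query $\phi\colon Z\to[-1,1]$ proceeds in two steps. First, the user draws a sign $b\in\{-1,+1\}$ with $\pr[b=1\mid z]=(1+\phi(z))/2$, so that $\E[b\mid z]=\phi(z)$ and $|b|\le 1$. Then $b$ is passed through standard $\eps$-randomized response, outputting $\tilde b=b$ with probability $e^\eps/(e^\eps+1)$ and $\tilde b=-b$ otherwise. By inspection $R_\phi$ is an $\eps$-local randomizer, and the debiased value $\hat\phi(z)\doteq \frac{e^\eps+1}{e^\eps-1}\cdot \tilde b$ satisfies $\E[\hat\phi(z)\mid z]=\phi(z)$ with $|\hat\phi(z)|\le (e^\eps+1)/(e^\eps-1)=O(1/\eps)$ for $\eps\le 1$ (and bounded by $O(e^\eps)$ in general).

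To simulate $\A_{SQ}$, the server runs it step by step: when $\A_{SQ}$ poses the $j$-th query $\phi_j$, the server uses a fresh disjoint block $B_j\subset[n]$ of size $n_1$ and, for each $i\in B_j$, invokes $\LR_S(i,R_{\phi_j})$; it then feeds $v_j=\frac{1}{n_1}\sum_{i\in B_j}\hat\phi_j(z_i)$ back into $\A_{SQ}$ as the oracle's answer. Hoeffding's inequality applied to the bounded i.i.d.\ variables $\hat\phi_j(z_i)$ gives $\pr[|v_j-\E_P[\phi_j]|>\tau]\le 2\exp(-\Omega(n_1(\eps\tau)^2))$, which is at most $\delta/t$ once $n_1=O(\log(t/\delta)/(\eps\tau)^2)$. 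A union bound over $j\in[t]$ yields that, except with probability $\delta$, every $v_j$ is a valid $\STAT_P(\tau)$ response, so $\A_{SQ}$ produces exactly the output it would on some valid sequence of oracle responses. Since $B_1,\ldots,B_t$ are disjoint, every sample is accessed with a single $\eps$-randomizer, so $\A$ is $\eps$-LDP, and the total sample count is $n=t\cdot n_1=O(t\log(t/\delta)/(\eps\tau)^2)$.

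For the non-interactive clause, observe that if $\A_{SQ}$ is non-interactive then the queries $\phi_1,\ldots,\phi_t$ are all fixed before any response is seen; the server therefore dispatches the $t$ randomizers $R_{\phi_1},\ldots,R_{\phi_t}$ on the disjoint blocks $B_1,\ldots,B_t$ in parallel and collects the responses in a single round, so $\A$ is non-interactive as well. The only substantive technical point is tracking the factor $(e^\eps+1)/(e^\eps-1)$ in the variance bound to recover the precise form of $n_0$; beyond that, all steps are routine concentration and privacy accounting.
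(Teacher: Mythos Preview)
The paper does not actually prove this theorem: it is stated as a result of \citet{KasiviswanathanLNRS11} and only the one-sentence idea preceding the theorem statement (simulate each $\STAT_P(\tau)$ query with $O(\log(1/\delta)/(\eps\tau)^2)$ fresh samples) is given. Your proposal is precisely the standard implementation of that idea---randomized response on a Bernoulli encoding of $\phi(z)$, debiasing, Hoeffding, and a union bound over the $t$ queries on disjoint blocks---and it is correct; this is essentially the argument in the cited reference, so there is nothing to contrast.

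One minor remark: your parenthetical ``bounded by $O(e^\eps)$ in general'' is true but loose, since $(e^\eps+1)/(e^\eps-1)\to 1$ as $\eps\to\infty$; the factor is $\Theta(1/\eps)$ for $\eps\le 1$ and $\Theta(1)$ for $\eps\ge 1$, which is exactly what the stated $n_0=O(t\log(t/\delta)/(\eps\tau)^2)$ reflects in the regime $\eps\le 1$.
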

\citet{KasiviswanathanLNRS11} also prove a converse of this theorem.
\begin{thm}[\citep{KasiviswanathanLNRS11}]
\label{thm:LDP-2-SQ}
Let $\A$ be an $\eps$-LPD algorithm that makes at most $t$ queries to $\LR_S$ for $S$ drawn i.i.d. from $P^n$. Then for every $\delta >0$ there is an SQ algorithm $\A_{SQ}$ that in expectation makes $O(t \cdot e^\eps)$ queries to $\STAT_P(\tau)$ for $\tau =\Theta(\delta/(e^{2\eps}t))$ and produces the same output as $\A$ with probability at least $1-\delta$. Further, if $\A$ is non-interactive then $\A_{SQ}$ is non-interactive.
\end{thm}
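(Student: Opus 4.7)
The plan is to simulate each query of $\A$ to the $\LR_S$ oracle by drawing a sample from the corresponding marginal distribution via rejection sampling that queries $P$ only through $\STAT_P(\tau)$. Fix an $\eps$-local randomizer $R \colon Z \to W$ that $\A$ applies to a fresh draw $z \sim P$; the distribution we must sample from is $q(w) := \E_{z \sim P}[\pr[R(z)=w]]$. Pick any reference point $z_0 \in Z$ and let $p_0(w) := \pr[R(z_0)=w]$. The $\eps$-LDP guarantee gives $\pr[R(z)=w] \leq e^\eps p_0(w)$ for every $z$, so the function $\phi_w(z) := \pr[R(z)=w]/(e^\eps p_0(w))$ has range $[0,1]$ and expectation $a(w) := q(w)/(e^\eps p_0(w))$ under $P$. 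This $a(w)$ is precisely the acceptance probability needed for rejection sampling against the proposal $p_0$, and it is itself the expectation of a valid statistical query on $P$.

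To simulate one $\LR_S$ query, $\A_{SQ}$ repeats the following attempt until acceptance: draw $w \sim p_0$ using its own randomness (legal since $R$ and $z_0$ are known to the algorithm), query $\STAT_P(\tau)$ on $\phi_w$ to obtain $v_w$, and accept $w$ with probability $\tilde a(w) := \max(0, \min(1, v_w))$. With exact SQ answers the accepted $w$ has distribution $q$ and each attempt succeeds with probability $\int p_0 \cdot a\, dw = 1/e^\eps$, so in expectation $e^\eps$ attempts suffice per simulated query and $O(t e^\eps)$ queries to $\STAT_P(\tau)$ suffice in total across the $t$ queries of $\A$. Feeding each accepted $w$ back to $\A$ as the response of $\LR_S$ defines $\A_{SQ}$.

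It remains to bound the total variation distance between the simulated transcript and a true execution of $\A$. Per LDP query, the accepted $w$ under noisy acceptance has distribution proportional to $\tilde a \cdot p_0$, with normalizer $Z_{\tilde a} \in [1/e^\eps - \tau, 1/e^\eps + \tau]$, while the true distribution equals $a \cdot p_0 \cdot e^\eps$. Decomposing the comparison into a numerator perturbation (contributing $O(\tau e^\eps)$ since $|a - \tilde a| \leq \tau$ and the density is scaled by $e^\eps$) and a normalizer perturbation (contributing $O(\tau e^{2\eps})$ since $|e^\eps - 1/Z_{\tilde a}| = O(\tau e^{2\eps})$) yields a per-query TV bound of $O(\tau e^{2\eps})$, which sums to $O(\tau t e^{2\eps})$ overall. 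Choosing $\tau = \Theta(\delta/(e^{2\eps} t))$ makes this at most $\delta$, and Markov's inequality converts the expected-$e^\eps$-attempts-per-query guarantee into the claimed $O(t e^\eps)$ expected SQ count.

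Non-interactivity passes through because for each fixed randomizer $R_j$ the queries $\phi_{w_{j,1}}, \phi_{w_{j,2}}, \ldots$ used in its simulation depend only on $R_j$, $z_0$, and the internal randomness of $\A_{SQ}$, never on the responses of $\STAT_P$; if the $R_j$'s are committed up front then so are all of the $\phi_{w_{j,k}}$'s. The main obstacle is the error compounding in rejection sampling at acceptance rate $e^{-\eps}$: an additive $\tau$ error in the per-sample acceptance probability inflates to $\tau e^{2\eps}$ error in the output distribution after conditioning on success, which is what forces the $e^{2\eps}$ factor in the tolerance.
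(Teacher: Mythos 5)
This theorem is quoted from \citep{KasiviswanathanLNRS11} rather than proved in the paper, and your proposal correctly reconstructs their argument: rejection sampling from the proposal $R(z_0)$, with the acceptance probability $q(w)/(e^\eps p_0(w))$ expressed as the mean of a $[0,1]$-valued statistical query, acceptance rate $e^{-\eps}$ giving the expected query count, and the $\tau e^{2\eps}$ error inflation after conditioning on acceptance forcing the stated tolerance. The approach and the constants match the cited proof; the only detail worth tightening is that for the non-interactive case one should truncate the number of attempts at a fixed bound so that all queries can be committed up front.
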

\paragraph{PAC learning and margin complexity:}
Our results are for the standard PAC model of learning \citep{Valiant:84}.
\begin{defn}
\label{def:pac}
Let $X$ be a domain and $C$ be a class of Boolean functions over $X$. An algorithm $\A$ is said to PAC learn $C$ with error $\alpha$ if for every distribution $D$ over $X$ and $f\in C$, given access (via oracle or samples) to the input distribution over examples $(x,f(x))$ for $x\sim D$, the algorithm outputs a function $h$ such that $\pr_D[f(x) \neq h(x)]\leq \alpha$ with probability at least $2/3$.
\end{defn}
We say that the learning algorithm is efficient if its running time is polynomial in $\log|X|$, $\log|C|$ and $1/\eps$.

For dimension $d$, we denote by $\B^d(1)$ the unit ball in $\ell_2$ norm in $\R^d$.
\begin{defn}
\label{def:mc}
Let $X$ be a domain and $C$ be a class of Boolean functions over $X$. The {\em margin complexity} of $C$, denoted $\mc(C)$, is the minimal number $M\ge 0$ such that for some $d$, there is an embedding $\Psi:X \to \B^d(1)$ for which the following holds: for every $f\in C$ there is $w \in \B^d(1)$ such that
\[
\min_{x \in X} \{ f(x) \cdot \la w,\Psi(x)\ra \} \ge \frac{1}{M} .
\]
\end{defn}
As pointed out in \citep{Feldman:08ev}, margin complexity\footnote{The results there are stated in terms of another notion that is closely related to margin complexity. Namely,  the smallest dimension $d$ for which for which there exists a mapping of $X$ to $\zod$ such that every $f \in C$ becomes expressible as a majority function over some subset $T\subseteq [d]$ of variables. See the discussion in Sec.~6 of \citep{KallweitSimon:11}.} is equivalent (up to a polynomial) to the existence of a (possibly randomized) algorithm that outputs a small set of functions such that with significant probability one of those functions is correlated with the target function. The upper bound in \citep{Feldman:08ev} was sharpened by \citet{KallweitSimon:11} although they proved it only  for deterministic algorithms (which corresponds to a single fixed set of functions and is referred to as the CSQ dimension). It is however easy to see that their sharper bound extends to randomized algorithms with an appropriate adjustment of the bound and we give the resulting statement below:
\begin{lem}[\citep{Feldman:08ev,KallweitSimon:11}]
\label{lem:csq2mc}
Let $X$ be a domain and $C$ be a class of Boolean functions over $X$. Assume that there exists a (possibly randomized) algorithm $\A$ that generates a set
of functions $h_1,\ldots, h_m$ satisfying: for every $f\in C$ and distribution $D$ over $X$ with probability at least $\beta >0$ (over the randomness of $\A$) there exists $i \in [m]$ such that $|\E_{x\sim D}[f(x)h_i(x)]| \geq 1/m$. Then $$\mc(C) \leq \frac{2}{\beta} m^{3/2} .$$
\end{lem}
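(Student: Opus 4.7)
The plan is to build the margin embedding directly from the distribution that $\A$ induces on hypotheses and then use linear programming duality to relate the achievable margin to a second moment of correlations that the randomised weak-learning guarantee controls cleanly.

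First, the embedding. Let $\mathcal{H}^{*}$ be the set of hypotheses appearing with positive probability in $\A$'s output, and for $h\in\mathcal{H}^{*}$ set $q_h := \Pr_{T\sim\A}[h\in T]$. Because every output $T$ has exactly $m$ elements, $\sum_h q_h = \E_T[|T|]=m$. Define $\Psi\colon X\to\reals^{\mathcal{H}^{*}}$ by $\Psi_h(x) := \sqrt{q_h/m}\cdot h(x)$; since $h(x)\in\{-1,+1\}$, $\|\Psi(x)\|_2^2 = (1/m)\sum_h q_h = 1$, so $\Psi$ lands in the unit Euclidean ball. The $q_h$-weighting is the key design choice: it ties the unit-ball constraint on $w_f$ to the statistics of $\A$ and prevents the norm blow-up one would incur with an unweighted embedding over the possibly huge set $\mathcal{H}^{*}$.

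Next, for each $f\in C$ I would find $w_f$ by duality. The substitution $c_h := w_f^h\sqrt{q_h/m}$ recasts the search for $w_f$ with $\|w_f\|_2\le 1$ and margin $\min_x f(x)\la w_f,\Psi(x)\ra \ge \alpha$ as the search for a signed measure $c$ on $\mathcal{H}^{*}$ with $\sum_h c_h^2/q_h \le 1/m$ and $f(x)\sum_h c_h h(x)\ge\alpha$ for every $x$. Replacing $\min_x$ by $\min_D$ over distributions on $X$ and applying von Neumann's minimax theorem (the $c$-set is a compact ellipsoid, the $D$-set is the simplex on $X$), the best attainable $\alpha$ equals
\[
\alpha^{*}(f) \;=\; \min_D \max_{c\,:\,\sum_h c_h^2/q_h \le 1/m}\ \sum_h c_h\,\E_D[fh] \;=\; \min_D \sqrt{\tfrac{1}{m}\sum_h q_h\,\E_D[fh]^2},
\]
where the second equality is Cauchy--Schwarz, tight at $c_h\propto q_h\E_D[fh]$.

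To finish, the second-moment quantity above is lower-bounded uniformly in $D$ using the hypothesis on $\A$: swapping sum and expectation,
\[
\sum_h q_h\,\E_D[fh]^2 \;=\; \E_{T\sim\A}\Big[\sum_{h\in T}\E_D[fh]^2\Big] \;\ge\; \E_{T\sim\A}\Big[\max_{h\in T}\E_D[fh]^2\Big] \;\ge\; \beta\cdot(1/m)^2,
\]
since with probability $\ge\beta$ over $T\sim\A$ some $h\in T$ satisfies $|\E_D[fh]|\ge 1/m$. Hence $\alpha^{*}(f)\ge\sqrt{\beta}/m^{3/2}$ for every $f\in C$, and because $1/\sqrt{\beta}\le 2/\beta$ for $\beta\in(0,1]$ we get $\mc(C)\le m^{3/2}/\sqrt{\beta}\le (2/\beta)m^{3/2}$. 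The main step to get right is the duality: the embedding weights $q_h$ must be chosen so that the unit-ball constraint on $w_f$ becomes a $q$-weighted $\ell_2$ constraint whose Cauchy--Schwarz dual is exactly the average squared correlation $\sum_h q_h\E_D[fh]^2$ that the weak-learning assumption controls; every other ingredient is bookkeeping around this.
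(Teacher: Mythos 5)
Your proposal is correct, and it is worth noting that the paper itself gives no proof of this lemma at all: it is imported from \citep{Feldman:08ev,KallweitSimon:11} with only the remark that the deterministic bound of Kallweit and Simon extends to randomized algorithms. Your argument is a valid self-contained derivation along the same lines as the cited $m^{3/2}$ bound: the probability-weighted embedding $\Psi_h(x)=\sqrt{q_h/m}\,h(x)$, the change of variables turning the unit-ball constraint into the ellipsoid $\sum_h c_h^2/q_h\le 1/m$, the Cauchy--Schwarz evaluation of the inner maximum as $\bigl(\tfrac1m\sum_h q_h\E_D[fh]^2\bigr)^{1/2}$, and the swap of $\max_c$ and $\min_D$ are all sound, and the final step correctly converts the hypothesis on $\A$ into the uniform-in-$D$ lower bound $\sum_h q_h\E_D[fh]^2\ge\beta/m^2$ via $\E_T[\max_{h\in T}\E_D[fh]^2]\ge\beta/m^2$. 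You in fact obtain the slightly stronger conclusion $\mc(C)\le m^{3/2}/\sqrt{\beta}$, which implies the stated $\tfrac2\beta m^{3/2}$ since $1/\sqrt{\beta}\le 2/\beta$ for $\beta\le 1$. Three small caveats you should make explicit: (i) in the paper's application the $h_i$ are $[-1,1]$-valued rather than $\pm1$-valued, so $\|\Psi(x)\|_2\le 1$ rather than $=1$, which changes nothing; (ii) writing $q_h=\pr[h\in T]$ and summing over $\mathcal{H}^*$ presupposes that $\A$'s output distribution has countable support, so you should either assume finitely many random bits (harmless for all uses in the paper) or index coordinates by pairs (random string, position); (iii) when $X$ is infinite the simplex of distributions is not compact, so the minimax swap should be justified by Sion's theorem using compactness of the ellipsoid side, with $D$ ranging over finitely supported distributions so that $\inf_D\E_D[fg]=\inf_x f(x)g(x)$.
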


The conditions in Lemma \ref{lem:csq2mc} are also known to be necessary for low margin complexity.
\begin{lem}[\citep{Feldman:08ev,KallweitSimon:11}]
\label{lem:mc2csq}
Let $X$ be a domain, $C$ be a class of Boolean functions over $X$ and $d = \mc(C)$. Then for $m = O(\ln (|C||X|)d^2)$, there exists a set
of functions $h_1,\ldots, h_m$ satisfying: for every $f\in C$ and distribution $D$ over $X$ there exists $i \in [m]$ such that $|\E_{x\sim D}[f(x)h_i(x)]| \geq 1/m$.
\end{lem}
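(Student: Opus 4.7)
The plan is to take the $h_i$ to be random halfspace indicators lifted through the margin embedding. Let $\Psi\colon X\to\B^d(1)$ be the embedding witnessing $\mc(C)=d$ and, for each $f\in C$, let $w_f\in\B^d(1)$ satisfy $f(x)\la w_f,\Psi(x)\ra\ge 1/d$ for every $x\in X$. I would sample $v_1,\dots,v_m$ i.i.d.\ uniformly from the sphere $S^{d-1}\subset\R^d$ and set $h_i(x):=\sign(\la v_i,\Psi(x)\ra)$. The argument then proceeds in two stages: first show that some signed convex combination of the $h_i$'s attains a pointwise positive margin with every $f\in C$, and then pass to the ``some $h_i$ is correlated with $f$ under every $D$'' formulation by LP duality.

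For the first stage, the key tool is the random-hyperplane (Grothendieck-type) identity: for nonzero $u_1,u_2\in\R^d$,
\[\E_{v\sim S^{d-1}}\bigl[\sign(\la v,u_1\ra)\sign(\la v,u_2\ra)\bigr]\;=\;\tfrac{2}{\pi}\arcsin\!\left(\tfrac{\la u_1,u_2\ra}{\|u_1\|\,\|u_2\|}\right).\]
Writing $h_v(x):=\sign(\la v,\Psi(x)\ra)$ and $\sigma_f(v):=\sign(\la v,w_f\ra)$, applying the identity with $u_1=\Psi(x)$ and $u_2=w_f$, and combining it with the margin assumption, with $\|w_f\|,\|\Psi(x)\|\le 1$, and with $|\arcsin(t)|\ge|t|$ on $[-1,1]$, yields
\[f(x)\cdot\E_v\bigl[\sigma_f(v)\,h_v(x)\bigr]\;\ge\;\tfrac{2}{\pi d}\qquad\text{for every }x\in X.\]
Thus the (infinite) signed average $\E_v[\sigma_f(v)h_v(\cdot)]$ is a $\Theta(1/d)$-margin approximator of $f$ on every point of $X$.

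I would then discretize by setting $m=\Theta(d^2\ln(|C||X|))$ and forming $\bar h_f(x):=(1/m)\sum_{i=1}^m\sigma_f(v_i)\,h_{v_i}(x)$ for the sampled $v_i$'s. Each summand lies in $\{-1,+1\}$ with mean $\E_v[\sigma_f(v)h_v(x)]$, so Hoeffding gives $\pr[|\bar h_f(x)-\E_v[\sigma_f(v)h_v(x)]|\ge 1/(\pi d)]\le 2\exp(-\Omega(m/d^2))$. A union bound over the $|C|\cdot|X|$ pairs $(f,x)$ shows that for an appropriate constant in $m=O(d^2\ln(|C||X|))$ there is a realization of the $v_i$'s such that $f(x)\bar h_f(x)\ge 1/(\pi d)\ge 1/m$ simultaneously for all $f\in C$ and $x\in X$. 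Fix such a realization as the $h_i$'s.

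For the second stage, write $\bar h_f=\sum_i\beta_{f,i}h_{v_i}$ with $\beta_{f,i}:=\sigma_f(v_i)/m$, so that $\|\beta_f\|_1=1$. For any distribution $D$ on $X$, the pointwise bound $f(x)\bar h_f(x)\ge 1/m$ implies
\[\tfrac{1}{m}\;\le\;\E_D[f\bar h_f]\;=\;\sum_i\beta_{f,i}\,\E_D[fh_{v_i}]\;\le\;\|\beta_f\|_1\cdot\max_i\!\bigl|\E_D[fh_{v_i}]\bigr|,\]
and hence $\max_i|\E_D[fh_{v_i}]|\ge 1/m$, which is exactly the lemma. The only real obstacle is ensuring uniform concentration across all $(f,x)$ pairs in the discretization step, but since each term of $\bar h_f$ is bounded by $1$ and $|C|,|X|$ are finite, this reduces to a routine Hoeffding plus union bound computation; everything else is either the random-hyperplane identity or the $\|\beta\|_1$--$\max_i$ duality used above.
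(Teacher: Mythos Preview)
Your argument is correct. Note that the paper does not actually prove this lemma; it is quoted from \citep{Feldman:08ev,KallweitSimon:11} (and the accompanying footnote explains that those references state the result in terms of an equivalent ``majority-cover'' dimension). The random-hyperplane-rounding construction you use is precisely the standard device in that line of work for converting a large-margin embedding into a small set of $\pm 1$ features that uniformly approximates every $f\in C$, so your route is essentially the intended one.

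One notational point worth cleaning up: you write $\Psi\colon X\to\B^d(1)$ with $d=\mc(C)$, conflating the margin complexity with the ambient dimension of the embedding. Definition~\ref{def:mc} allows the embedding to live in $\R^N$ for an arbitrary $N$; only the margin is $1/\mc(C)$. This does not damage your proof, since both the Grothendieck identity and the Hoeffding bound depend solely on the margin $1/\mc(C)$ and not on $N$, but the two symbols should be disentangled.
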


\section{Lower bounds for label-non-adaptive algorithms}
We prove the SQ version of our lower bound. Theorem \ref{thm:main-intro} then follows immediately by applying the simulation result from Theorem \ref{thm:LDP-2-SQ}.
\begin{thm}
\label{thm:main-sq}
Let $C$ be a class of Boolean functions closed under negation. Assume that for some $m$ there exists a label-non-adaptive possibly randomized SQ algorithm $\A$ that, with success probability at least $2/3$, PAC learns $C$ distribution-independently with error less than $1/2$ using at most $m$ queries to $\STAT(1/m)$. Then $\mc(C) \leq 6 m^{3/2}$.
\end{thm}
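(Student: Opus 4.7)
The plan is to apply Lemma \ref{lem:csq2mc} by extracting from $\A$ a randomized family of at most $m$ candidate functions and then showing that label-non-adaptivity, together with closure of $C$ under negation, forces at least one candidate to be $1/m$-correlated with any target $f \in C$ under any distribution $D$. Achieving success probability $\beta = 1/3$ in this extraction will yield $\mc(C) \le 6 m^{3/2}$ directly.

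For the extraction, I would decompose each label-dependent query. For any $\phi \colon X \times \{-1,1\} \to [-1,1]$, the identity $\phi(x,y) = \alpha(x) + y\,\psi(x)$ with $\alpha(x) = (\phi(x,1)+\phi(x,-1))/2$ and $\psi(x) = (\phi(x,1)-\phi(x,-1))/2 \in [-1,1]$ gives $\E_D[\phi(x, f(x))] = \E_D[\alpha(x)] + \E_D[f(x)\psi(x)]$. Since $\A$ is label-non-adaptive, its list $\phi_1^r,\ldots,\phi_{k_r}^r$ of label-dependent queries (with $k_r \le m$) depends only on $\A$'s internal coins $r$. Extracting the corresponding $\psi_1^r,\ldots,\psi_{k_r}^r$ yields a randomized procedure that outputs at most $m$ bounded functions without any oracle access, which is precisely the object Lemma \ref{lem:csq2mc} takes as input.

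I would next establish the required correlation guarantee by a symmetry argument using the negation closure of $C$. Fix $f\in C$ and a distribution $D$, and suppose for contradiction that $\pr_r[B] > 2/3$, where $B = \{r : \max_i |\E_D[f(x)\psi_i^r(x)]| < 1/m\}$. Construct, for each of the two targets $f$ and $-f \in C$, an SQ oracle that responds to each label-dependent query $\phi_i$ with $\E_D[\alpha_i(x)]$ when $r \in B$ (and with the true expectation otherwise), and to each label-independent query with its true expectation under $D$. Both oracles are valid: on $B$ the value $\E_D[\alpha_i(x)]$ lies within $1/m$ of the true expectations under both $f$ and $-f$, while the label-independent responses depend only on the marginal $D$ and are thus identical across the two experiments. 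Crucially, on $B$ the oracles are completely identical, so $\A$'s trace and its output $h_r$ coincide in the two experiments. Applying the PAC guarantee to $f$ and to $-f$ yields $\pr_r[\pr_D[h_r \ne f] < 1/2] \ge 2/3$ and $\pr_r[\pr_D[h_r \ne -f] < 1/2] \ge 2/3$, so together with $\pr_r[B] > 2/3$ there is an $r$ in the intersection, which is impossible because $h_r \in \{-1,+1\}$ gives $\pr_D[h_r \ne -f] = 1 - \pr_D[h_r \ne f]$. Hence $\pr_r[B] \le 2/3$ for every $(f,D)$, and Lemma \ref{lem:csq2mc} with $\beta = 1/3$ delivers $\mc(C) \le 6 m^{3/2}$.

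The main obstacle is the oracle construction: we need one adversarial oracle that is simultaneously valid for both $f$ and $-f$ and gives bit-identical responses on the event $B$, despite the adaptive label-independent queries that $\A$ may interleave. What makes this possible is that label-dependent queries are non-adaptive in $\A$, so the oracle can read them off $r$ up front, test membership in $B$, and commit to the $\alpha$-responses that are consistent with both targets; the subsequent adaptive label-independent queries then follow the same path in both experiments because their true expectations depend only on the marginal $D$.
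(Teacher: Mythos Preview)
Your proposal is correct and follows essentially the same approach as the paper: decompose each label-dependent query into its label-independent part $\alpha$ and correlational part $\psi$, use the negation-closure symmetry to force at least one $\psi_i^r$ to be $1/m$-correlated with $f$ with probability $\geq 1/3$, and then invoke Lemma~\ref{lem:csq2mc} with $\beta=1/3$.

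One small presentational difference worth noting: the paper defines its adversarial oracle \emph{per query} (for a query $\phi$ with correlational part $h$, return $\E_D[g(x)]$ if $|\E_D[f(x)h(x)]|<1/m$ and the exact value otherwise), whereas you phrase the oracle as conditioning on the global event $r\in B$. Since the paper's preliminaries stipulate that the oracle's responses may depend on the queries asked so far but not on the algorithm's internal randomness, the per-query formulation is cleaner---it is manifestly a function of the query alone and needs no access to $r$ or to the full batch of queries. Your version still works because on $B$ the two oracles coincide, but if you want to be fully rigorous about the oracle model you should switch to the per-query test. The counting argument you use at the end (three events of probability $>2/3$, $\ge 2/3$, $\ge 2/3$ must intersect) is a slight repackaging of the paper's ``$\pr[T\cap S(f,D)]>1/3$ forces failure on $-f$'' step, and is equally valid.
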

\begin{proof}
We first recall a simple observation from \citep{BshoutyFeldman:02} that allows to decompose each statistical query into a correlational and label-independent parts.  Namely, for a function $\phi\colon X\times \on \to [-1,1]$,
$$\phi(x,y) =  \frac{1-y}{2}\phi(x,-1) + \frac{1+y}{2}\phi(x,1) = \frac{\phi(x,-1) + \phi(x,1)}{2} + y \cdot \frac{\phi(x,1) - \phi(x,-1)}{2} .$$
For a query $\phi$, we will use $h$ and $g$ to denote the parts of the decomposition $\phi(x,y) = g(x) + yh(x)$: $$h(x) \doteq \frac{\phi(x,1) - \phi(x,-1)}{2}$$ and $$g(x) \doteq \frac{\phi(x,1) + \phi(x,-1)}{2} .$$

For every input distribution $D$ and target functions $f$, we define the following SQ oracle. Given a query $\phi$, if $\left|\E_D[f(x)h(x)]\right| \geq 1/m$ then the oracle provides the exact expectation $\E_D[\phi(x,f(x)]$ as the response. Otherwise, it answers with $\E_D[g(x)]$. Note that, by the properties of the decomposition, this is a valid implementation of the SQ oracle.

Let $\A(r)$ denote $\A$ with its random bits set to $r$, where $r$ is drawn from some distribution $R$. Let $\phi^r_1,\ldots,\phi^r_{m'}\colon X\times \on \to [-1,1]$ be the statistical queries asked by $\A(r)$ that depend on the label (where $m' \leq m$). Note that, by the definition of a label-non-adaptive SQ algorithm, all these queries are fixed in advance and do not depend on the oracle's answers. Let $g^r_i$ and $h^r_i$ denote the decomposition of these queries into correlational and label-independent parts. Let $h^r_{f,D}$ denote the hypothesis output by $\A(r)$ when used with the SQ oracle defined above.

We claim that if $\A$ achieves error $<1/2$ with probability at least $2/3$, then for every $f \in C$ and distribution $D$, with probability at least $1/3$, there exists $i \in [m']$ such that $|\E_D[f(x)h^r_i(x)]| \geq 1/m$ (satisfying the conditions of Lemma \ref{lem:csq2mc} with $\beta =1/3$). To see this, assume for the sake of contradiction that for some distribution $D$ and function $f\in C$, $$\pr_{r\sim R} [r\in T(f,D)]>2/3, $$ where $T(f,D)$ is the set of all random strings $r$ such that for all $i\in [m']$, $|\E_D[f(x)h^r_i(x)]| < 1/m$. Let $S(f,D)$ denote the set of random strings $r$ for which $\A$ succeeds (with the given SQ oracle), that is $\pr_D[f(x) \neq h^r_{f,D}(x)] <1/2$.

By our assumption,
$\pr_{r\sim R} [r\in S(f,D)]\geq 2/3 $ and therefore
 \equ{\pr_{r\sim R} [r\in T(f,D) \cap S(f,D)]>1/3 .\label{eq:succeed}}

Now, observe that $T(-f,D) = T(f,D)$ and, in particular, the answers of our SQ oracle to $\A(r)$'s queries are identical for $f$ and $-f$ whenever $r \in T(f,D)$. Further, if $\pr_D[f(x) \neq h^r_{f,D}(x)] <1/2$ then $\pr_D[-f(x) \neq h^r_{f,D}(x)] >1/2$. This means that
for every $r \in T(f,D)\cap S(f,D)$, $\A(r)$ fails for the target function is $-f$ and the distribution $D$ (by definition, $-f \in C$).
By eq.~\eqref{eq:succeed} we obtain that $\A$ fails with probability $>1/3$ for $-f$ and $D$. This contradicts our assumption and therefore we obtain that $$\pr_{r\sim R} [r\not\in T(f,D)]\geq 1/3. $$
By Lemma \ref{lem:csq2mc}, we obtain the claim.
\end{proof}

\subsection{Applications}
\label{sec:apps}
We will now spell out several easy corollaries of our lower bound, simulation results and existing SQ algorithms. Together they imply the claimed separations for halfspaces and decision lists.
We start with the class of halfspaces over $\zod$ which we denote by $C_{HS}$. The lower bound on the margin complexity of halfspaces is implied by a celebrated work of \citet{GHR:92} on the complexity of linear threshold circuits (the connection of this result to margin complexity is due to \citet{Sherstov:08}):
\begin{thm}[\citep{GHR:92,Sherstov:08}] $\mc(C_{HS}) = 2^{\Omega(d)}$.
\end{thm}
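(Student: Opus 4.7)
The plan is to reduce the lower bound on $\mc(C_{HS})$ to a margin-complexity lower bound on a suitable sign submatrix of the halfspace evaluation matrix, and then apply the spectral method of Forster.

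First, I would observe that margin complexity is monotone under taking sub-classes and sub-domains: if $\mathcal{F}\subseteq C_{HS}$ and $X'\subseteq \{-1,1\}^d$, then the evaluation matrix $A\in\{\pm1\}^{\mathcal{F}\times X'}$ satisfies $\mc(A)\le \mc(C_{HS})$. So it suffices to exhibit $\mathcal{F}$ and $X'$ of sizes $2^{\Omega(d)}$ whose halfspace sign matrix $A_{f,x}=f(x)$ has $\mc(A) = 2^{\Omega(d)}$. The natural candidate, in the spirit of Goldmann--H\aa stad--Razborov, is to fix the weight vector $w_0=(1,2,4,\dots,2^{d-1})$ and consider the halfspaces $f_{\theta}(x) = \sign(\langle w_0,x\rangle-\theta)$ for $\theta$ ranging over a suitable set of $2^{\Omega(d)}$ thresholds, together with a carefully chosen set of $2^{\Omega(d)}$ inputs. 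With the right choice, the resulting $A$ is (up to a sign-equivalent reindexing) the ``GHR halfspace matrix'' studied by Sherstov.

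Second, I would appeal to Forster's spectral lower bound: for any $A\in\{\pm1\}^{N\times M}$, $\mc(A) \ge \sqrt{NM}/\|A\|_{\mathrm{op}}$, where $\|\cdot\|_{\mathrm{op}}$ is the spectral norm. The remaining task is to bound $\|A\|_{\mathrm{op}}$ for the halfspace submatrix constructed above. Since $N,M=2^{\Theta(d)}$, it suffices to show $\|A\|_{\mathrm{op}} \le 2^{d(1-\Omega(1))}$, which implies $\mc(A) \ge 2^{\Omega(d)}$ as required. This spectral estimate is the heart of the GHR/Sherstov argument and is typically carried out via a Fourier/character-sum analysis on $\{-1,1\}^d$ exploiting the geometric ``ramp'' structure of the thresholds of $\langle w_0,x\rangle$: the exponential spread of the weights $w_0$ makes the level sets of the inner product essentially equidistributed in a way that limits correlations with every rank-one perturbation.

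The main obstacle I would expect is precisely this final spectral bound on the halfspace matrix; it is not elementary and is where the original GHR weight lower bound (and its sign-rank/margin strengthening by Forster and Sherstov) does the real work. An alternative route that sidesteps an ad hoc Fourier calculation is Sherstov's \emph{pattern matrix method}, which takes any function requiring large weight as a linear threshold function in the standard basis (provided by GHR) and automatically amplifies it into a matrix with small spectral norm relative to its dimensions, by embedding it as a ``lifted'' block submatrix of the halfspace evaluation matrix over a larger cube; this is the cleanest way I am aware of to convert the GHR weight lower bound into the margin complexity lower bound $\mc(C_{HS})=2^{\Omega(d)}$ asserted in the theorem.
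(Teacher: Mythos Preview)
The paper does not give a proof of this theorem; it is stated as a black-box citation of \citep{GHR:92,Sherstov:08} and used only as input to Theorem~\ref{thm:main-intro}. So there is no ``paper's own proof'' to compare against.

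Your sketch is a reasonable outline of how the cited result is actually obtained in the literature. The monotonicity of margin complexity under restriction to sub-matrices is correct, the spectral lower bound $\mc(A)\ge \sqrt{NM}/\|A\|_{\mathrm{op}}$ is the right tool (it follows from the discrepancy characterization of margin, together with the spectral bound on discrepancy), and you correctly flag that the entire content lies in exhibiting a $2^{\Theta(d)}\times 2^{\Theta(d)}$ sign sub-matrix of the halfspace evaluation table whose operator norm is $2^{d(1-\Omega(1))}$. Your pointer to Sherstov's pattern-matrix method as the clean way to turn the GHR weight lower bound into such a spectral gap is exactly the route taken in \citep{Sherstov:08}.

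One caution: the specific construction you write down---fixing $w_0=(1,2,\ldots,2^{d-1})$ and varying only the threshold $\theta$---does not by itself yield the required spectral gap; that family gives you (essentially) the greater-than matrix, whose margin complexity is only polynomial. The GHR/Sherstov hard instance varies the \emph{weight vector} as well (or, in the pattern-matrix formulation, lifts a base function with high approximate degree to a block-composed matrix), and it is this richer row-indexing that drives the operator norm down. If you were to actually carry out the proof, you would need to replace your candidate $\mathcal{F}$ by the pattern-matrix family rather than a single-weight, varying-threshold family.
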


We denote the class of decision lists over $\zod$ by $C_{DL}$ (see \citep{KearnsVazirani:94} for a standard definition).
A lower bound on the margin complexity decision lists was derived by \citet{BuhrmanVW:07} in the context of communication complexity.
\begin{thm}[\citep{BuhrmanVW:07}] $\mc(C_{DL}) = 2^{\Omega(d^{1/3})}$.
\end{thm}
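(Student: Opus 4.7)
The plan is to exhibit an explicit family inside $C_{DL}$ whose associated $\pm 1$ sign matrix has provably tiny discrepancy, and then invoke the standard duality between discrepancy and margin complexity. The canonical candidate is the Odd-Max-Bit function $\mathrm{OMB}_n : \{0,1\}^n \to \{-1,+1\}$ given by $\mathrm{OMB}_n(x) = (-1)^{\max\{i : x_i = 1\}}$ (empty max handled by convention), which is realized as a length-$n$ decision list that tests $x_n, x_{n-1}, \ldots, x_1$ in order. Its XOR lift $F(x,y) = \mathrm{OMB}_n(x \oplus y)$ has the property that for each fixed $y$, the function $x \mapsto F(x,y)$ is itself a decision list over $x$ (once $y$ is fixed, each test $x_i \oplus y_i$ depends on only one variable of $x$). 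Hence the $2^n \times 2^n$ sign matrix $M_F$ sits as a row-submatrix of the class sign matrix of $C_{DL}$ on dimension $d=n$, which immediately gives $\mc(C_{DL}) \ge \mc(M_F)$.

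I would then invoke the Linial--Shraibman/Forster duality between margin and discrepancy: for every sign matrix $M$, $\mc(M) \cdot \mathrm{disc}(M) = \Omega(1)$, where $\mathrm{disc}(M)$ denotes the discrepancy under the worst-case distribution on entries. Equivalently, $\mc(M) = \Theta(\gamma_2^\infty(M))$ and $\gamma_2^\infty(M) \ge \Omega(1/\mathrm{disc}(M))$. Combined with the discrepancy bound of \citep{BuhrmanVW:07} --- namely $\mathrm{disc}(M_F) \le 2^{-\Omega(n^{1/3})}$ for the XOR lift of $\mathrm{OMB}_n$ --- this yields $\mc(C_{DL}) \ge \mc(M_F) = 2^{\Omega(n^{1/3})} = 2^{\Omega(d^{1/3})}$, as claimed.

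The main obstacle is the discrepancy bound itself. The proof in \citep{BuhrmanVW:07} proceeds by lifting the approximate-degree lower bound $\widetilde{\deg}(\mathrm{OMB}_n) = \Omega(n^{1/3})$ (whose matching upper bound is Minsky--Papert) to a two-party discrepancy bound, via a dual-witness / pattern-matrix-style argument; this is the technically demanding step. Taking it as a black box, the reduction from discrepancy to margin complexity and the verification that every row of the XOR lift lies inside $C_{DL}$ are both routine.
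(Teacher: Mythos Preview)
The paper does not prove this statement; it is quoted as a black box from \cite{BuhrmanVW:07} (together with the Linial--Shraibman/Sherstov link between discrepancy and margin complexity) and used only to instantiate the general lower bound for decision lists. So there is no ``paper's own proof'' to compare against.

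That said, your reconstruction is essentially the right one and matches how the bound is actually obtained in the literature: realize $\mathrm{OMB}_n$ as a decision list, observe that for each fixed $y$ the XOR-shifted function $x\mapsto \mathrm{OMB}_n(x\oplus y)$ is again a decision list (each test $z_i=1$ becomes the literal $x_i=\bar y_i$), so the XOR matrix is a row-submatrix of the sign matrix of $C_{DL}$ on $d=n$ variables; then apply $\mc(M)=\Omega(1/\mathrm{disc}(M))$ and the $2^{-\Omega(n^{1/3})}$ discrepancy upper bound for that matrix. The only caveat is historiographical rather than mathematical: the argument in \cite{BuhrmanVW:07} predates the full pattern-matrix machinery, so describing it as a ``pattern-matrix-style'' lift of $\widetilde{\deg}(\mathrm{OMB}_n)=\Omega(n^{1/3})$ is a slight anachronism, but the resulting bound and the overall shape of the argument are correct.
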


Combining these results with Theorem \ref{thm:main-intro} we obtain the lower bound on complexity of LDP algorithms for learning linear classifiers and decision lists given in Corollary \ref{cor:ldp-hsdl-intro}.

\remove{
\begin{cor}
Any label-non-adaptive SQ algorithm that PAC learns $C_{HS}$ over $\zod$ with error less than $1/2$ and success probability $\geq 2/3$ using at most $m$ queries to $\STAT(1/m)$ must have $m = 2^{\Omega(d)}$.
\end{cor}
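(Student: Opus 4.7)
The plan is to apply Lemma \ref{lem:csq2mc}: I want to extract, from the label-non-adaptive SQ algorithm $\A$, a randomized procedure that outputs at most $m$ functions $h_1,\ldots,h_{m'}$ such that for every target $f \in C$ and distribution $D$, with probability at least $1/3$ one of these functions has correlation at least $1/m$ with $f$ under $D$. Setting $\beta = 1/3$ in the lemma then yields $\mc(C) \leq 6 m^{3/2}$, matching the stated bound.

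The first step is a standard decomposition of statistical queries. Any query $\phi \colon X\times\on \to [-1,1]$ can be written as $\phi(x,y) = g(x) + y h(x)$ with $g$ and $h$ label-independent functions bounded in $[-1,1]$ (up to constants). Because $\A$ is label-non-adaptive, the queries whose value depends on $y$ are fixed in advance as a function of $\A$'s random tape $r$; call them $\phi^r_1,\ldots,\phi^r_{m'}$ with decompositions $g^r_i + y h^r_i$. The candidate set I output is simply $\{h^r_1,\ldots,h^r_{m'}\}$, drawn with $r$ chosen as $\A$ would choose it.

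To prove correlation, I will construct an oracle that mimics the truthful SQ oracle but, whenever the correlational part of a query is small, replaces the answer with the purely label-independent piece $\E_D[g(x)]$. Concretely: on query $\phi = g + y h$, if $|\E_D[f(x)h(x)]| \geq 1/m$ return $\E_D[\phi(x,f(x))]$, otherwise return $\E_D[g(x)]$. In either case the deviation is at most $1/m$, so the response is valid for $\STAT(1/m)$. The key structural property is that the oracle's response on the ``low-correlation'' branch depends only on $D$ and the query, not on the sign of $f$.

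The finishing move is a symmetry argument using negation closure. Suppose for contradiction that for some $f\in C$ and $D$ the low-correlation event (no $h^r_i$ is $1/m$-correlated with $f$) has probability $> 2/3$ over $r$. Intersecting with the success event (probability $\geq 2/3$) gives probability $> 1/3$ that $\A(r)$ both succeeds and receives answers that are unchanged if we swap $f$ for $-f$ (since the low-correlation branch is symmetric under $f \mapsto -f$, and the low-correlation condition itself depends only on $|\E_D[fh]|$). On this event $\A(r)$ produces the same hypothesis $h^r_{f,D}$ for both target functions, and since $\pr_D[f \neq h] + \pr_D[-f \neq h] = 1$, it cannot have error below $1/2$ for both, contradicting the guarantee that $\A$ learns $-f \in C$ with success probability $2/3$. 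Hence the high-correlation event has probability at least $1/3$, which is exactly the hypothesis of Lemma \ref{lem:csq2mc}. I expect the main subtlety to be verifying carefully that the simulated oracle's answers are literally identical under the swap $f \leftrightarrow -f$ on the low-correlation event — the decomposition is designed so that this works, but it is what makes the negation-closure hypothesis load-bearing.
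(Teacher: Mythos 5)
Your reduction is correct and is essentially verbatim the paper's proof of Theorem~\ref{thm:main-sq}: the decomposition $\phi(x,y)=g(x)+yh(x)$, the oracle that answers $\E_D[g]$ on the low-correlation branch, the observation that those answers are invariant under $f\mapsto -f$, and the negation/symmetry contradiction feeding into Lemma~\ref{lem:csq2mc} with $\beta=1/3$. All of that checks out.

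But there is a genuine gap: what you have proved is the general bound $\mc(C)\leq 6m^{3/2}$ for negation-closed $C$, and you then declare this to be ``the stated bound.'' It is not. The corollary asserts $m=2^{\Omega(d)}$ for the specific class $C_{HS}$ of halfspaces over $\zod$, and to get there you must combine your inequality with a lower bound on the margin complexity of halfspaces, namely $\mc(C_{HS})=2^{\Omega(d)}$. This is not a routine verification --- it is the theorem of Goldmann--H{\aa}stad--Razborov \citep{GHR:92}, translated into margin-complexity language by \citet{Sherstov:08} --- and it is the entire reason the conclusion is exponential in $d$ rather than vacuous. Your writeup never invokes it, so as it stands you have proved a statement about an arbitrary negation-closed class, not the corollary about halfspaces. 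You should also record, even if only in one line, that $C_{HS}$ is in fact closed under negation, since your symmetry argument requires $-f\in C$; this holds for halfspaces (negate the weight vector and adjust the threshold to break ties), but it is a hypothesis you are using, not a freebie.
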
}

Learnability of decision list using statistical queries is a classical result of \citet{Kearns:98}. Applying the simulation in Theorem \ref{thm:sq-2-LDP} we obtain polynomial time learnability of this class by (interactive) LDP algorithms.
\begin{thm}[\citep{Kearns:98}]
\label{thm:learn-dl}
For every $\eps,\alpha >0$, there exists an $\eps$-LDP learning algorithm that PAC learns $C_{DL}$ with error $\alpha$ using $\poly(d/(\eps\alpha))$ i.i.d.~examples (with one query per example).
\end{thm}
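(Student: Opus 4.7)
The plan is to first describe the SQ learner of \citet{Kearns:98} for $C_{DL}$ and then invoke the simulation in Theorem \ref{thm:sq-2-LDP} to translate it into an $\eps$-LDP algorithm with the claimed sample complexity.

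First I would recall the structure of a decision list $f$ over $\zod$: it is a sequence of rules $(\ell_1,b_1),(\ell_2,b_2),\ldots,(\ell_k,b_k)$ where each $\ell_j$ is a literal on one of the $d$ variables and $b_j\in\on$, and the default label is some $b_{k+1}\in\on$. On input $x$, $f(x)=b_j$ where $j$ is the smallest index with $\ell_j(x)=1$ (and $f(x)=b_{k+1}$ if no literal fires). The Kearns SQ algorithm iteratively peels off the first rule of the list. At stage $t$ (after rules $(\ell_1,b_1),\ldots,(\ell_{t-1},b_{t-1})$ have been identified) it searches over the $2d$ remaining literals $\ell$ and both labels $b\in\on$ and uses statistical queries to estimate, under the distribution $D$ conditioned on the event that none of $\ell_1,\ldots,\ell_{t-1}$ fires, the probabilities $p^+_{\ell,b}=\Pr[\ell(x)=1\wedge f(x)=b]$ and $p^-_{\ell,b}=\Pr[\ell(x)=1\wedge f(x)=-b]$. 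Since the true first remaining rule $(\ell_t,b_t)$ satisfies $p^-_{\ell_t,b_t}=0$ and $p^+_{\ell_t,b_t}$ equals the mass of examples that the rule labels, thresholding the queries at roughly $\alpha/d$ identifies a valid next rule (up to error introduced by the tolerance). After at most $d$ iterations, either the list is fully recovered up to error $\alpha$ or the residual mass is at most $\alpha$, in which case any default label suffices. Conditioning on a previously fixed prefix is easily expressed as a bounded query function by multiplying by the indicator of the event $\bigwedge_{j<t}\ell_j(x)=0$.

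Next I would bound the resources: the algorithm asks $O(d^2)$ statistical queries in total (each iteration considering $O(d)$ literals and two labels, with $O(d)$ iterations), each at tolerance $\tau=\Theta(\alpha/d)$, and these queries are adaptive (each iteration depends on the previously identified prefix). Now apply Theorem \ref{thm:sq-2-LDP}: setting $t=O(d^2)$, $\tau=\Theta(\alpha/d)$, and $\delta=\Theta(1)$ yields an $\eps$-LDP algorithm that faithfully simulates the SQ learner with probability $\ge 2/3$ using
\[
n = O\!\left(\frac{t\log(t/\delta)}{(\eps\tau)^2}\right)=\poly(d/(\eps\alpha))
\]
i.i.d.~samples, with each sample queried only once. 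This gives the stated bound, and correctness and the final accuracy guarantee follow from the correctness of Kearns' original SQ learner.

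No serious obstacle is expected here since both ingredients (the Kearns SQ algorithm for $C_{DL}$ and the SQ-to-LDP simulation) are taken as given. The only point to be careful about is managing the total tolerance budget across the $O(d^2)$ queries so that the accumulated error in deciding which literal is the next rule of the list is controlled at $O(\alpha/d)$ per stage; this only affects the polynomial dependence on $d/(\eps\alpha)$ and not the qualitative bound.
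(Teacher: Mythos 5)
Your proposal is correct and follows exactly the route the paper intends: Kearns' classical SQ learner for decision lists (the iterative rule-peeling algorithm with $O(d)$ adaptive stages, $O(d^2)$ queries at tolerance $\mathrm{poly}(\alpha/d)$) composed with the SQ-to-LDP simulation of Theorem \ref{thm:sq-2-LDP}, which queries each sample once. The paper itself offers no further proof beyond this citation-plus-simulation argument, so your sketch simply fills in the standard details.
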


In the case of halfspaces, \citet{DunaganVempala:04} give the first efficient algorithm for PAC learning halfspaces (their description is not in the SQ model but it is known that their algorithm can be easily converted to the SQ model \citep{BalcanF15}).
Applying Theorem \ref{thm:sq-2-LDP} we obtain learnability of this class by (interactive) LDP algorithms.
\begin{thm}[\citep{DunaganVempala:04,BalcanF15}]
\label{thm:learn-hs}
For every $\eps,\alpha >0$, there exists an $\eps$-LDP learning algorithm that PAC learns $C_{HS}$ with error $\alpha$ using $\poly(d/(\eps\alpha))$ i.i.d.~examples (with one query per example).
\end{thm}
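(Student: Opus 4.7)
The plan is a direct two-step reduction: first exhibit an efficient SQ algorithm that PAC learns $C_{HS}$ to arbitrary error $\alpha$, then apply Theorem~\ref{thm:sq-2-LDP} to compile it into an $\eps$-LDP algorithm.

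For the first step, I would invoke the SQ version of the Dunagan--Vempala rescaled perceptron algorithm established by \citet{BalcanF15}. This gives a (distribution-independent) PAC learner for $C_{HS}$ that, for any target error $\alpha$, makes $t=\poly(d/\alpha)$ queries to $\STAT(\tau)$ with $\tau=1/\poly(d/\alpha)$ and outputs a halfspace of error at most $\alpha$. The key structural feature making this compatible with the SQ model is that each iteration of the rescaled perceptron only requires estimating (i)~gradients of a smoothed surrogate loss at the current weight vector and (ii)~second-moment statistics along chosen directions, restricted to ``near-boundary'' points; both are expectations of bounded functions of $(x,y)$ under the input distribution and hence correspond to valid statistical queries.

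For the second step, I would plug the parameters $t,\tau$ into Theorem~\ref{thm:sq-2-LDP} with a constant failure probability $\delta$. This yields an $\eps$-LDP algorithm with sample complexity
\[
n \;=\; O\!\left(\frac{t\log(t/\delta)}{(\eps\tau)^2}\right) \;=\; \poly\!\left(\frac{d}{\eps\alpha}\right),
\]
and, by the construction of the simulation in \citep{KasiviswanathanLNRS11}, each sample is accessed by exactly one local randomizer, giving the ``one query per example'' qualification. A union bound over the failure of the SQ algorithm and of the simulation yields overall success probability at least $2/3$, as required by the PAC definition.

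The only nontrivial verification is the SQ-compatibility claim in the first step: the Dunagan--Vempala algorithm as originally stated in \citep{DunaganVempala:04} is phrased in terms of perceptron-style updates on individual labeled examples and explicit searches for ``thin'' directions of the feasible region, neither of which is immediately a statistical query. The main obstacle is therefore to justify that these steps can be replaced by suitable empirical averages --- specifically, estimating clipped gradients and covariance matrices of distributions truncated to near-boundary regions --- with only polynomial tolerance loss. This is precisely the content of \citep{BalcanF15}, so once cited, the remainder of the proof is a routine application of Theorem~\ref{thm:sq-2-LDP}.
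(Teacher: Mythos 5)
Your proposal matches the paper's argument exactly: the paper establishes Theorem~\ref{thm:learn-hs} by citing the SQ implementation of the Dunagan--Vempala rescaled perceptron due to \citet{BalcanF15} and then compiling it into an $\eps$-LDP algorithm via Theorem~\ref{thm:sq-2-LDP}, which is precisely your two-step reduction. The only substance beyond the citations is the routine parameter bookkeeping and the one-query-per-example observation, both of which you handle correctly.
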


\section{Label-non-adaptive learning algorithm for halfspaces}
Our algorithm for learning large-margin halfspaces relies on the formulation of the problem of learning a halfspace as the following convex optimization problem. \iffull\else Proofs of results in this section can be found in the full version \citep{DanielyF18}.\fi
\begin{lem}\label{lem:objective}
Let $P$ be a distribution on  $\B^d(1)\times \on$. Suppose that there is a vector $w^*\in \B^d(1)$ such that $\pr_{(x,\ell)\sim P}[\inner{w^*,\ell x}\ge \gamma] =1$.  Let $(e_1,\ldots,e_d)$ denote the standard basis of $\R^d$ and let $w$ be a unit vector such that for $\alpha,\beta \in (0,1)$
\begin{equation}\label{eq:1}
F(w) \doteq \E_{(x,\ell)\sim P}\left[\sum_{i=1}^d \left|\inner{w + \gamma e_i,x}\right| - \inner{w+ \gamma e_i,\ell x} + \sum_{i=1}^d \left|\inner{w - \gamma e_i,x}\right| - \inner{w- \gamma e_i,\ell x} \right]\le  \alpha \beta.
\end{equation}
Then, $F(w^*) = 0$ and $$\pr_P\left[\inner{w,\ell x} \ge -\frac{\beta}{2} + \frac{\gamma^2}{\sqrt{d}}\right]\ge 1-\alpha.$$ In particular, if $\beta<\frac{2\gamma^2}{\sqrt{d}}$ then $\pr_P\left[\inner{w,\ell x} > 0\right]\ge 1-\alpha$.
\end{lem}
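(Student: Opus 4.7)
The plan is to reduce the claim to a pointwise lower bound on the integrand of $F(w)$ in terms of $\ell\langle w,x\rangle$ and then close the argument by a Markov-style inequality. That $F(w^*)=0$ is immediate: by the margin hypothesis $\langle w^*,\ell x\rangle\ge\gamma$ almost surely and because $\|x\|_2\le 1$ implies $|x_i|\le 1$, one has $\langle w^*\pm\gamma e_i,\ell x\rangle\ge\gamma-\gamma|x_i|\ge 0$ a.s., so each summand of the form $|\langle v,x\rangle|-\langle v,\ell x\rangle$ (with $v=w^*\pm\gamma e_i$) vanishes.

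For an arbitrary candidate $w$ abbreviate $a\doteq\langle w,x\rangle$. The elementary identity $|u+v|+|u-v|=2\max(|u|,|v|)$ collapses the integrand of $F(w)$ into
\[
G(x,\ell)\ :=\ 2\sum_{i=1}^{d}\max\bigl(|a|,\gamma|x_i|\bigr)\ -\ 2d\,\ell a,
\]
which is manifestly nonnegative (it is a sum of terms $|\langle v,x\rangle|-\langle v,\ell x\rangle\ge 0$). The heart of the argument is the pointwise bound
\[
G(x,\ell)\ \ge\ 2\bigl(\gamma^2/\sqrt{d}\ -\ \ell a\bigr)\qquad\text{for every }(x,\ell)\in\mathrm{supp}(P).
\]
To prove this I invoke $w^*$ a second time: $\gamma\le\langle w^*,\ell x\rangle\le\|x\|_2$ forces $\|x\|_2\ge\gamma$, and therefore $\|x\|_\infty\ge\|x\|_2/\sqrt{d}\ge\gamma/\sqrt{d}$. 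Pick $i^*$ with $|x_{i^*}|=\|x\|_\infty$, bound the $i^*$-summand by $\max(|a|,\gamma|x_{i^*}|)\ge\gamma|x_{i^*}|\ge\gamma^2/\sqrt{d}$, and each of the remaining $d-1$ summands by $\max(|a|,\gamma|x_i|)\ge|a|$. This yields $\sum_i \max\ge\gamma^2/\sqrt{d}+(d-1)|a|$, and using $|a|\ge\ell a$ gives
\[
G(x,\ell)\ \ge\ 2\gamma^2/\sqrt{d}+2(d-1)(|a|-\ell a)-2\ell a\ \ge\ 2\gamma^2/\sqrt{d}-2\ell a,
\]
as claimed.

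To finish, consider the ``bad'' event $B=\{\ell a<\gamma^2/\sqrt{d}-\beta/2\}$. On $B$ the pointwise bound gives $G>\beta$, while $G\ge 0$ always, so
\[
F(w)\ =\ \E[G]\ \ge\ \beta\cdot\pr[B].
\]
Combining with the hypothesis $F(w)\le\alpha\beta$ gives $\pr[B]\le\alpha$, which is exactly the stated conclusion; the ``in particular'' clause follows since $\beta<2\gamma^2/\sqrt{d}$ makes the threshold strictly positive. The main obstacle is the pointwise inequality on $G$: the trick of singling out one coordinate, namely the one with largest $|x_i|$, to harvest $\|x\|_\infty\ge\gamma/\sqrt{d}$, while handling the remaining $d-1$ coordinates with the trivial bound $\max\ge|a|$, is what produces the crucial $\gamma^2/\sqrt{d}$ additive term that drives the final conclusion.
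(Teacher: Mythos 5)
Your proof is correct and follows essentially the same route as the paper's: both arguments rest on the nonnegativity of the integrand, a Markov-type step, and the observation that the coordinate with $|x_{i^*}|=\|x\|_\infty\ge\gamma/\sqrt{d}$ supplies the crucial additive $\gamma^2/\sqrt{d}$. The only difference is cosmetic --- you first establish the pointwise bound $G\ge 2(\gamma^2/\sqrt{d}-\ell\langle w,x\rangle)$ (via the identity $|u+v|+|u-v|=2\max(|u|,|v|)$) and then apply Markov to the bad event, whereas the paper applies Markov first and then bounds $\langle w\pm\gamma e_i,\ell x\rangle$ pairwise on the good event.
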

\iffull
\begin{proof}
To see the first part, note that with probability 1 over $(x,\ell)\sim P$, $$\inner{w^*,\ell x} \geq \gamma \geq |\inner{\gamma e_i,\ell x}| .$$
Therefore $\inner{w^*+ \gamma e_i,\ell x} \geq 0$ and $\inner{w^*- \gamma e_i,\ell x} \geq 0$ and thus
$\inner{w^*+ \gamma e_i,\ell x} = |\inner{w^*+ \gamma e_i,x}|$ and $\inner{w^*- \gamma e_i,\ell x} = |\inner{w^*+ \gamma e_i,x}|$ making $F(w^*) =0$.

By Markov's inequality, with probability at least $1-\alpha$, the sum is less that $\beta$. In this event, for all $i\in [d]$, $$\left|\inner{w + \gamma e_i,x}\right| - \inner{w+ \gamma e_i,\ell x}\le \beta$$ and $$\left|\inner{w - \gamma e_i,x}\right| - \inner{w- \gamma e_i,\ell x}\le \beta,$$ which implies that $\inner{w+ \gamma e_i,\ell x}\ge -\beta/2$ and $\inner{w- \gamma e_i,\ell x}\ge -\beta/2$.
Furthermore, in this event, there exists $i$ such that $|\inner{x,e_i}| \ge \frac{\gamma}{\sqrt{d}}$ (This is true with probability $1$ since $\|x\| \ge \inner{w^*,\ell x}\ge \gamma$). For this $i$ one of  $\inner{w+ \gamma e_i,\ell x}, \inner{w- \gamma e_i,\ell x}$ must be $\ge -\frac{\beta}{2} + \frac{2\gamma^2}{\sqrt{d}}$. Hence,
$$\inner{w,\ell x} = \frac{\inner{w+ \gamma e_i,\ell x} + \inner{w- \gamma e_i,\ell x}}{2} \ge -\frac{\beta}{2} + \frac{\gamma^2}{\sqrt{d}}.$$
\end{proof}
\fi
We now describe how to solve the convex optimization problem given in Lemma \ref{lem:objective}. Both the running time and accuracy of queries of our solution depend on the ambient dimension $d$. This dimension is not necessarily upper-bounded by a polynomial in $\mc(C)$. However, the well-known random projection argument shows that the dimension can be reduced to $O(\log(1/\delta)/\mc(C)^2)$ at the expense of small multiplicative decrease in the margin and probability of at most $\delta$ of failure (for every individual point over the randomness of the random projection) \citep{ArriagaVempala:99,Ben-DavidES02}. This fact together with Markov's inequality implies the following standard lemma:
\begin{lem}
\label{lem:jl}
Let $d$ be an arbitrary dimension. For every $\delta$ and $\gamma$, there exists a distribution $\Psi$ over mappings $\psi\colon \B^d(1) \to \B^{d'}(1)$, where $d' = O(\log(1/\delta)/\gamma^2)$ such that: For every distribution $D$ and function $f$ over $\B^d(1)$, if there exists $w \in \B^d(1)$ such that
$\pr_{x\sim D}[f(x) \cdot \la w,x\ra \ge \gamma] = 1$ then
$$\pr_{\psi \sim \Psi} \lb \exists w',\ \pr_{x\sim D} \lb f(x) \cdot \la w',\psi(x)\ra \geq \frac{\gamma}{2} \rb \geq 1-\delta \rb \geq 1-\delta .$$
\end{lem}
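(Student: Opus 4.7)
The plan is to take $\Psi$ to be (essentially) a standard Johnson--Lindenstrauss random projection, rescaled slightly to map into the unit ball, and then convert the pointwise margin-preservation guarantee of JL into the desired joint statement over $(\psi, x)$ via Markov's inequality. Concretely, let $M$ be a $d' \times d$ matrix with i.i.d.\ entries $\mathcal{N}(0,1/d')$ (any standard JL construction works), with $d' = C \log(1/\delta)/\gamma^2$ for a sufficiently large absolute constant $C$, and define $\psi(x) := Mx/(1+\gamma/8)$, truncated to $\B^{d'}(1)$ on the small event where $\|Mx\| > 1+\gamma/8$. The classical JL concentration bound yields that for any fixed pair $u,v \in \B^d(1)$,
$$\pr_{M}\bigl[|\inner{Mu,Mv} - \inner{u,v}| > \gamma/8\bigr] \leq \delta^2/4,$$
and similarly $\pr_M[\|Mu\| > 1+\gamma/8] \leq \delta^2/4$ for any fixed $u \in \B^d(1)$.

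Next, fix a target $f$ and distribution $D$, and let $w^* \in \B^d(1)$ realize the margin-$\gamma$ separator, so that $\pr_{x\sim D}[f(x)\inner{w^*,x} \geq \gamma] = 1$. I take the candidate projected separator to be $w' := \psi(w^*)$, which lies in $\B^{d'}(1)$ by construction. For each $x$ in the support of $D$, the union of the above concentration events (applied to $u = w^*$, $v = x$, and to the norms $\|Mw^*\|, \|Mx\|$) is an event $E_x$ with $\pr_\psi[E_x] \leq \delta^2$, and outside $E_x$ one has $\|\psi(w^*)\|, \|\psi(x)\| \leq 1$ together with
$$f(x)\inner{w',\psi(x)} \;=\; \frac{f(x)\inner{Mw^*,Mx}}{(1+\gamma/8)^2} \;\geq\; \frac{f(x)\inner{w^*,x} - \gamma/8}{(1+\gamma/8)^2} \;\geq\; \frac{7\gamma/8}{(1+\gamma/8)^2} \;\geq\; \frac{\gamma}{2}$$
(the final inequality holding for, say, $\gamma \leq 1/2$, which can be assumed without loss of generality). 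Setting $Y(\psi,x) := \mathbf{1}[f(x)\inner{w',\psi(x)} < \gamma/2]$, Fubini gives
$$\E_\psi \E_{x\sim D}[Y(\psi,x)] \;=\; \E_{x\sim D} \E_\psi[Y(\psi,x)] \;\leq\; \delta^2,$$
and Markov's inequality applied to the nonnegative random variable $g(\psi) := \E_{x\sim D}[Y(\psi,x)]$ yields $\pr_\psi[g(\psi) > \delta] \leq \delta$, which is exactly the conclusion of the lemma.

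The only real technicality is the rescaling: one must choose the shrinkage factor $1/(1+\gamma/8)$ carefully so that $\psi$ actually maps into $\B^{d'}(1)$ with probability $1-O(\delta^2)$ while losing only a multiplicative $1-O(\gamma)$ factor in the preserved inner product. This loss, combined with the additive $\gamma/8$ slack from the JL concentration, still leaves a margin of at least $\gamma/2$. I do not expect any substantive obstacle beyond tracking these constants --- the lemma is essentially a direct repackaging of the Arriaga--Vempala / Ben-David--Eiron--Simon margin-preservation results that the paper has already cited, plus a single Markov step to pass from pointwise high probability over $\psi$ to the joint ``with probability $1-\delta$ over $\psi$, with probability $1-\delta$ over $x$'' form demanded in the statement.
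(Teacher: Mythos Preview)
Your approach is correct and matches what the paper does: the paper does not give a detailed proof of this lemma at all, simply stating that it follows from the standard Johnson--Lindenstrauss random projection argument of \citep{ArriagaVempala:99,Ben-DavidES02} together with Markov's inequality. Your proposal spells out exactly that sketch --- pointwise margin preservation for the projected separator $w' = \psi(w^*)$ with failure probability $O(\delta^2)$ per point, then Fubini and Markov to obtain the joint $(1-\delta,1-\delta)$ guarantee --- so there is nothing to add.
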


Lemma \ref{lem:jl} ensures that at most a tiny fraction $\delta$ of the points (according to $D$) does not satisfy the margin condition. This is not an issue as we will be implementing our algorithm in the SQ model that, by definition allows any of the answers to its queries to be imprecise. The lemma also allows for tiny probability that the mapping will fail altogether (making the overall algorithm randomized).

Therefore the only ingredient missing for establishing Theorem \ref{thm:algorithm} is a label-non-adaptive SQ algorithm for solving the convex optimization algorithm in dimension $d$ using a polynomial in $d$, $1/\gamma$ and $1/\alpha$ number of queries and (the inverse of) tolerance:
\begin{lem}\label{lem:1}
Let $P$ be a distribution on  $\B^d(1)\times \on$. Suppose that there is a vector $w^*\in \B^d(1)$ such that $\pr_{(x,\ell)\sim P}[\inner{w^*,\ell x}\ge \gamma] =1$. There is a label-non-adaptive SQ algorithm that for every $\alpha \in (0,1)$ uses $O(d^4/(\gamma^4\alpha^2))$ queries to $\STAT_P(\Omega(\gamma^4\alpha^2/d^3))$, and finds a vector $w$ such that $\pr_P\left[\inner{w,\ell x} > 0\right]\ge 1-\alpha$.
\end{lem}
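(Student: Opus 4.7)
The plan is to solve the convex optimization problem from Lemma~\ref{lem:objective} by projected subgradient descent on $\B^d(1)$, estimating each (sub)gradient via statistical queries. The key structural observation is that a subgradient of $F$ at $w$ admits the decomposition
\[
g(w) = \E_P\Bigl[\sum_{i=1}^d \bigl(\sign(\la w+\gamma e_i,x\ra) + \sign(\la w-\gamma e_i,x\ra)\bigr)\, x\Bigr] - 2d\,\mu, \qquad \mu := \E_P[\ell x],
\]
using $\partial_w |\la w\pm\gamma e_i,x\ra| \ni \sign(\la w\pm\gamma e_i,x\ra)\,x$ and $\nabla_w \la w\pm\gamma e_i,\ell x\ra = \ell x$. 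The first, $w$-dependent term is label-independent, and all label dependence is compressed into the single fixed vector $\mu$, which in particular does not depend on $w$.

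Equipped with this decomposition, I would first estimate $\mu$ by the $d$ fixed label-dependent queries $\psi_j(x,\ell) = \ell\, x_j \in [-1,1]$, $j \in [d]$. Being chosen up front, these satisfy the label-non-adaptive requirement. I would then run $T$ iterations of projected subgradient descent $w_{t+1} = \Pi_{\B^d(1)}(w_t - \eta \hat g_t)$ starting at $w_0 = 0$, where $\hat g_t$ combines $-2d\,\hat\mu$ with per-coordinate SQ estimates of the label-independent part. For each coordinate $j\in[d]$, I use a single rescaled label-independent query
\[
\phi_{j,w_t}(x) = \frac{1}{2d}\sum_{i=1}^d \bigl(\sign(\la w_t + \gamma e_i, x\ra) + \sign(\la w_t - \gamma e_i, x\ra)\bigr)\, x_j \in [-1,1]
\]
to tolerance $\tau$; multiplying by $2d$ then recovers the target coordinate up to error $O(d\tau)$. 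The query functions $\phi_{j,w_t}$ depend on $w_t$, hence on earlier oracle responses, but they never reference $\ell$, so the whole algorithm is label-non-adaptive in the sense of Section~\ref{sec:prelims}.

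For convergence, one verifies $\|g(w)\|_2 \leq 4d$ and $\|\hat g_t - g(w_t)\|_2 \leq O(d^{3/2}\tau)$ (each coordinate has error $O(d\tau)$). The standard analysis of projected subgradient descent with inexact gradients on a set of diameter $2$ gives, for the average $\bar w$ of $w_1,\ldots,w_T$ with step size $\eta = \Theta(1/(d\sqrt{T}))$,
\[
F(\bar w) - F(w^*) \leq O(d/\sqrt{T}) + O(d^{3/2}\tau).
\]
Since $F(w^*) = 0$ by Lemma~\ref{lem:objective}, choosing $T = \Theta(d^3/(\gamma^4\alpha^2))$ and $\tau = \Theta(\gamma^4\alpha^2/d^3)$ yields $F(\bar w) \leq \alpha\gamma^2/\sqrt{d}$; a second appeal to Lemma~\ref{lem:objective} with $\beta = \gamma^2/\sqrt{d}$ then gives $\pr_P[\la \bar w,\ell x\ra > 0] \geq 1-\alpha$. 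The total number of queries is $d + dT = O(d^4/(\gamma^4\alpha^2))$, all to $\STAT_P(\tau)$, matching the statement. The main subtlety is verifying that all label dependence in the gradient collapses into the fixed vector $\mu$; once this is done, the remainder is routine bookkeeping to tune the tolerance and query budget so the SGD error falls below $\alpha\beta$.
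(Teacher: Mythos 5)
Your proposal is correct and follows essentially the same route as the paper's proof: reduce to finding $w$ with $F(w)\le\alpha\beta$ via Lemma~\ref{lem:objective}, run projected subgradient descent, and observe that the decomposition $F=F_1+F_2$ puts all label dependence into the constant gradient $-2d\,\E_P[\ell x]$, estimable once by $d$ fixed queries, while the $w$-dependent queries are label-independent. The only difference is that you carry out the inexact-subgradient convergence analysis explicitly, whereas the paper delegates the SQ implementation of gradient descent to a citation of \citep{FeldmanGV:15}; your parameter bookkeeping matches the stated query and tolerance bounds.
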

\iffull
\begin{proof}
By Lemma \ref{lem:objective}, it is enough to find a vector $w$ with $F(w)\le \alpha \beta$ for $\beta = \frac{2\gamma^2}{\sqrt{d}}$. We next explain how to find such a vector. To this end, consider the stochastic convex program of minimizing
$F(w)$ subject to the constraint that $\|w\|\le 1$. Since $F(w)$ is $4d$-Lipschitz over $\B^d(1)$, and $F(w^*)=0$, a solution with $F(w)\le \alpha\beta$ can be found using projected (sub-)gradient descent using $O(d^3/(\alpha\beta)^2)$ queries to $\STAT_P(\Omega((\alpha\beta/d)^2))$ \citep{FeldmanGV:15}. It remains to verify that the sub-gradient computations for this algorithm can be done using label-non-adaptive statistical queries. We decompose $F(w)$ into two parts $F(w) = F_1(w) + F_2(w)$ where,
\[
F_1(w) \doteq \E_{(x,\ell)\sim P}\left[\sum_{i=1}^d \left|\inner{w + \gamma e_i,x}\right| + \sum_{i=1}^d \left|\inner{w - \gamma e_i,x}\right|\right]
\]
and
\[
F_2(w)\doteq -\E_{(x,\ell)\sim P}\left[\sum_{i=1}^d \inner{w + \gamma e_i,\ell x} + \sum_{i=1}^d \inner{w - \gamma e_i,\ell x}\right].
\]
Now the sub-gradient of $F_1$ is
\[
\nabla F_1(w)=\E_{(x,\ell)\sim P} \left[\sum_{i=1}^d  (\sign(\inner{w + \gamma e_i,x}) + \sign(\inner{w - \gamma e_i,x}) )x \right],
\]
and is just a function of $x$. Hence, computing an estimate requires of sub-gradient of $F_1$ requires only label-independent SQs.

The gradient of (the linear) $F_2(w)$ is
$$\nabla F_2(w)= -2d\E_{(x,\ell)\sim P}[\ell x] .$$
Crucially, it does not depend on $w$ and hence can be computed using $d$ non-adaptive statistical queries once.
 \end{proof}
\fi

\section{Implications for distributed learning with communication constraints}
\label{sec:low-comm}
In this section we briefly define the model of bounded communication per sample, state the known equivalence results to the SQ model and spell out the immediate corollary of our lower bound.

In the bounded communication model \citep{Ben-DavidD98,SteinhardtVW16} it is assumed that the total number of bits learned by the server about each data sample is bounded by $\ell$ for some $\ell \ll \log |Z|$. As in the case of LDP this is modeled by using an appropriate oracle for accessing the dataset.
\begin{defn}
\label{def:comm}
We say that an algorithm $R\colon Z \to \zo^\ell$ extracts $\ell$ bits. For a dataset $S \in Z^n$, an $\COMM_S$ oracle takes as an input an index $i$ and an algorithm $R$ and outputs a random value $w$ obtained by applying $R(z_i)$. An algorithm is $\ell$-bit communication bounded if it accesses $S$ only via the $\COMM_S$ oracle with the following restriction: for all $i\in [n]$, if $\COMM_S(i,R_1),\ldots,\COMM_S(i,R_k)$ are the algorithm's invocations of $\COMM_S$ on index $i$ where each $R_j$ extracts $\ell_j$ bits then $\sum_{j\in [k]} \ell_j \leq \ell$.
\end{defn}
We use (non-)adaptive in the same sense as we do for LDP.

As first observed by \citet{Ben-DavidD98}, it is easy to simulate a single query to $\STAT_P(\tau)$ by extracting a single bit from each of the $O(1/\tau^2)$ samples. This gives the following simulation.
\begin{thm}[\citep{Ben-DavidD98}]
\label{thm:sq-2-COMM}
Let $\A_{SQ}$ be an algorithm that makes at most $t$ queries to $\STAT_P(\tau)$. Then for every $\delta >0$ there is an $\eps$-local algorithm $\A$ that uses $\COMM_S$ oracle for $S$ containing $n\geq n_0=O(t \log(t/\delta)/\tau^2)$ i.i.d.~samples from $P$ and produces the same output as $\A_{SQ}$ (for some valid answers of $\STAT_P(\tau)$) with probability at least $1-\delta$. Further, if $\A_{SQ}$ is non-interactive then $\A$ is non-interactive.
\end{thm}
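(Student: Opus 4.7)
The plan is to simulate each statistical query of $\A_{SQ}$ independently by randomized rounding: extract a single bit from each of a fresh batch of samples and then average to produce a $\tau$-accurate estimate of the query expectation. Since each sample is touched by exactly one 1-bit randomizer, the communication constraint is trivially satisfied, and the round structure of $\A_{SQ}$ is preserved one-for-one.

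Concretely, partition the $n$ available samples into $t$ disjoint blocks of size $k = n/t$, one block per query. For the $j$-th query $\phi_j \colon Z \to [-1,1]$, define the 1-bit randomizer $R_j(z)$ that outputs $1$ with probability $(1+\phi_j(z))/2$ and $0$ otherwise; then $2R_j(z)-1$ is an unbiased estimator of $\phi_j(z)$ supported in $\{-1,+1\}$. Apply $R_j$ to each sample in block $j$, average the returned bits, and rescale to obtain a value $v_j$ with $\E[v_j] = \E_{z\sim P}[\phi_j(z)]$. By Hoeffding's inequality, $|v_j - \E_P[\phi_j]| > \tau$ occurs with probability at most $2\exp(-k\tau^2/2)$. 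Choosing $k = O(\log(t/\delta)/\tau^2)$ and applying a union bound over all $t$ queries yields simultaneous $\tau$-accuracy with probability at least $1-\delta$. On this event the tuple $(v_1,\ldots,v_t)$ is a valid response sequence of $\STAT_P(\tau)$, so feeding these responses into $\A_{SQ}$ reproduces one of its possible outputs. The total sample budget is $n = tk = O(t\log(t/\delta)/\tau^2)$, matching the claim.

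To preserve the number of rounds, observe that $v_j$ depends only on block $j$ of samples and on $\phi_j$. If $\A_{SQ}$ is non-interactive, then all $\phi_j$ are fixed before any data is examined, so the entire list of randomizers $(R_1,\ldots,R_t)$ and the assignment of samples to queries can be determined in advance, yielding a non-interactive $\COMM_S$-algorithm. In the adaptive case, the simulator waits to compute $v_j$ before choosing $\phi_{j+1}$, which exactly mirrors the interaction pattern of $\A_{SQ}$. The only mildly subtle point is to ensure that each $v_j$ is produced as a deterministic function of the returned bits and the fixed query $\phi_j$, so that the final output of $\A_{SQ}$ is well-defined as a function of the transcript; this is immediate from the construction. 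There is no genuine obstacle here — the argument reduces to a Hoeffding tail bound inside a union bound — which is why the statement can be attributed directly to \citep{Ben-DavidD98}.
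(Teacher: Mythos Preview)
Your proposal is correct and matches the paper's approach exactly: the paper does not give a detailed proof but simply notes, in the sentence preceding the theorem, that one simulates each statistical query by extracting a single bit from each of $O(1/\tau^2)$ fresh samples --- precisely the randomized-rounding-plus-Hoeffding-plus-union-bound argument you wrote out. (The phrase ``$\eps$-local algorithm'' in the statement appears to be a copy-paste artifact from the analogous LDP simulation in Theorem~\ref{thm:sq-2-LDP}; your reading of the result as producing a $1$-bit $\COMM$ algorithm is the intended one.)
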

The converse of this theorem for the simpler $\COMM$ oracle that accesses each sample once was given in \citep{Ben-DavidD98,FeldmanGRVX:12}. For the stronger oracle in Definition \ref{def:comm}, the converse was given by \citet{SteinhardtVW16}.
\begin{thm}[\citep{SteinhardtVW16}]
\label{thm:COMM-2-SQ}
Let $\A$ be an $\ell$-bit communication bounded algorithm that makes queries to $\COMM_S$ for $S$ drawn i.i.d. from $P^n$. Then for every $\delta >0$, there is an SQ algorithm $\A_{SQ}$ that makes $2 n\ell$ queries to $\STAT_P\left(\delta/(2^{\ell+1} n)\right)$ and produces the same output as $\A$ with probability at least $1-\delta$. Further, if $\A$ is non-interactive then $\A_{SQ}$ is non-interactive.
\end{thm}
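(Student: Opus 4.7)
My plan is to simulate $\A$ step-by-step, using SQ queries to answer every $\COMM_S$ invocation that $\A$ makes. For each sample index $i \in [n]$, I would maintain a running transcript $h_i \in \{0,1\}^{\leq \ell}$ containing all bits that have been extracted from $z_i$ so far. Without loss of generality I would decompose every randomizer into single-bit extractors (any $\ell_j$-bit randomizer can be written as a sequence of $\ell_j$ adaptive $1$-bit randomizers), so every oracle response is a single bit appended to the corresponding $h_i$.

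When $\A$ requests the next bit from sample $i$ by specifying a randomizer $R \colon Z \to \{0,1\}$ (whose distribution is determined by the current state of $\A$ together with $h_i$), the response should be drawn from the distribution of $R(z)$ conditioned on the prior randomizers applied to $z$ having produced exactly $h_i$. To sample from this conditional law using $\STAT_P$, I would issue two statistical queries, estimating
\begin{align*}
B &\;=\; \Pr_{z\sim P}\!\bigl[\text{prior randomizers on $z$ output $h_i$}\bigr], \\
A &\;=\; \Pr_{z\sim P}\!\bigl[\text{prior randomizers on $z$ output $h_i$ and $R(z)=1$}\bigr],
\end{align*}
both of which are expectations of $\{0,1\}$-valued functions of $z$ (the randomness over the randomizers is internal to the query). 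I would then form $\hat p = \hat A/\hat B$, with a default value when $\hat B$ is below a small threshold, sample $b \sim \mathrm{Bernoulli}(\hat p)$, and append $b$ to $h_i$. Because each sample contributes at most $\ell$ bit extractions, the total query count is at most $2n\ell$, matching the bound.

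The main technical step is the tolerance analysis: coupling the simulated execution with the true execution (using the same internal randomness of $\A$), I would show that, at each bit decision, the optimal coupling disagrees with probability $|p-\hat p|$, and then union bound over all bit decisions. The subtle point is that $|\hat A/\hat B - A/B|$ can be large when $B$ is tiny, so the argument has to split on whether $\hat B$ is above a threshold. For ``good'' histories (large $\hat B$), the conditional probability is accurately estimated in the usual sense; for ``bad'' histories (small $\hat B$), the true probability of reaching them is itself small, and summing $\tau$ over the at most $n\cdot 2^{\ell+1}$ possible (sample, history) pairs gives the total contribution. Balancing the two cases with $\tau = \delta/(2^{\ell+1} n)$ yields total variation distance at most $\delta$ between the joint distribution of transcripts in the simulation and in the true execution, which in turn gives agreement of the final output with probability at least $1-\delta$.

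The main obstacle is precisely the low-probability-branch bookkeeping: care is needed to show that the simulation's handling of histories with near-zero estimated mass does not inflate the error. Finally, the non-interactive preservation is immediate: if $\A$ is non-interactive then every randomizer applied to every sample is committed to in advance (independent of any response), so all $2n\ell$ SQ queries are determined up front and can be issued in a single batch.
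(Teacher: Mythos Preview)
The paper does not contain a proof of this statement: it is quoted as a result of \citet{SteinhardtVW16} and used as a black box, so there is no ``paper's own proof'' to compare against. Your bit-by-bit simulation is the natural approach and matches the stated query budget $2n\ell$; it is in the same spirit as the LDP-to-SQ simulation of \citet{KasiviswanathanLNRS11} that the paper invokes for Theorem~\ref{thm:LDP-2-SQ}.

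That said, there is a genuine gap in your last paragraph. You assert that if $\A$ is non-interactive then ``all $2n\ell$ SQ queries are determined up front.'' This is not true for your construction. Even when every $\COMM$ randomizer is fixed in advance, your SQ query for the $k$-th bit of sample $i$ is the indicator of the event ``the first $k-1$ bits of $R_i(z)$ equal $h_i$,'' where $h_i$ is the \emph{simulated} prefix you have already produced. That prefix is a random function of earlier SQ answers, so the query function you hand to $\STAT_P$ depends on previous oracle responses; by the definition the paper uses, this makes $\A_{SQ}$ interactive. To make the simulation non-interactive you would have to ask, for each sample, the queries corresponding to every node of the depth-$\ell$ binary tree of prefixes (or every leaf), which is $\Theta(2^\ell)$ queries per sample rather than $2\ell$. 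With that change the tolerance $\delta/(2^{\ell+1}n)$ still suffices, but the query count no longer matches the stated $2n\ell$. You should either accept the larger query count in the non-interactive case or explain a different mechanism for committing to the $2n\ell$ queries in advance; the argument you wrote does not do either.

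A smaller point: your tolerance sketch is on the right track, but be careful that under your coupling the history $h_i$ is drawn from the \emph{simulated} law, not the true one, so ``the true probability of reaching a bad history is small'' does not by itself bound the contribution of bad histories to the coupling error. One clean way to close this is to argue inductively that the simulated law of each prefix is within $O(2^k\tau)$ in total variation of the true law at depth $k$, or to use the telescoping trick (reuse the same estimate of $\Pr[h_{<k}]$ as denominator at step $k$ and as numerator at step $k-1$) so that $\hat\mu(h)$ collapses to a single estimate of $\Pr[h]$.
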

Note that in this simulation we do not need to assume a separate bound on the number of queries since at most $\ell n$ queries can be asked.

A direct corollary of Theorems \ref{thm:main-sq} and \ref{thm:COMM-2-SQ} and is the following lower bound:
\begin{cor}
Let $C$ be a class of Boolean functions closed under negation. Any label-non-adaptive $\ell$-communication bounded algorithm that PAC learns $C$ with error less than $1/2$ and success probability at least $3/4$ using queries to $\COMM_S$ for $S$ drawn i.i.d.~from $P^n$ must have $n  = \mc(C)^{2/3}/2^\ell$.
\end{cor}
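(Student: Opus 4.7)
The plan is to compose the two results already assembled in the paper: the simulation of communication-bounded algorithms by SQ algorithms (Theorem \ref{thm:COMM-2-SQ}) and the margin-complexity lower bound for label-non-adaptive SQ learners (Theorem \ref{thm:main-sq}). The statement is phrased as an immediate corollary, so the argument should just be a careful composition with bookkeeping of the parameters.

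First, starting from the hypothesized label-non-adaptive $\ell$-bit communication-bounded learner $\A$ with success probability at least $3/4$, I would invoke Theorem \ref{thm:COMM-2-SQ} with $\delta = 1/12$. This produces an SQ learner $\A_{\mathrm{SQ}}$ that issues at most $2n\ell$ queries to $\STAT_P(\tau)$ with $\tau = 1/(24 \cdot 2^\ell n)$, and that agrees with $\A$ with probability at least $11/12$, hence succeeds with probability at least $3/4 - 1/12 = 2/3$. To match the normalized form required by Theorem \ref{thm:main-sq}, I would then set $m := \max\{2n\ell,\, 1/\tau\} = O(2^\ell n)$ and regard $\A_{\mathrm{SQ}}$ as making $m$ queries to $\STAT_P(1/m)$; inflating the query count with dummy queries and weakening the tolerance are always permissible. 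Applying Theorem \ref{thm:main-sq} yields $\mc(C) \leq 6 m^{3/2} = O((2^\ell n)^{3/2})$, and rearranging gives $n = \Omega(\mc(C)^{2/3}/2^\ell)$, as claimed.

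The one step that requires any actual thought is that Theorem \ref{thm:COMM-2-SQ} as stated only explicitly asserts preservation of full non-interactivity, whereas Theorem \ref{thm:main-sq} needs the weaker label-non-adaptive property. I would confirm this by inspecting the simulation: each call $\COMM_S(i,R_j)$ is replaced by SQs whose dependence on the label is inherited directly from the extractor $R_j$, so label-dependent SQs in $\A_{\mathrm{SQ}}$ are determined exactly by the label-dependent extractors of $\A$ (which are committed before any responses are seen, by the label-non-adaptive hypothesis), while any remaining adaptivity of $\A_{\mathrm{SQ}}$ is confined to label-independent SQs arising from label-independent extractors of $\A$. This is essentially the same observation the paper implicitly uses when applying Theorem \ref{thm:LDP-2-SQ} to deduce Theorem \ref{thm:main-intro} from Theorem \ref{thm:main-sq}, and it is the only nontrivial ingredient; the remainder is routine parameter arithmetic.
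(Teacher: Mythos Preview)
Your proposal is correct and follows exactly the route the paper indicates: the paper states this as ``a direct corollary of Theorems~\ref{thm:main-sq} and~\ref{thm:COMM-2-SQ}'' without further detail, and your composition of these two results with the natural parameter bookkeeping is precisely what is intended. Your remark that the simulation in Theorem~\ref{thm:COMM-2-SQ} preserves label-non-adaptivity (not just full non-interactivity) is a detail the paper leaves implicit, and you handle it correctly.
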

Our other results can be extended analogously.


\section{Discussion}
Our work shows that polynomial margin complexity is a necessary and sufficient condition for efficient distribution-independent PAC learning of a class of binary classifiers by a label-non-adaptive SQ/LDP/limited-communication algorithm. A natural open problem that is left open is whether there exists an efficient and fully non-interactive algorithm for any class of polynomial margin complexity. We conjecture that the answer is ``no" and in this case the question is how to characterize the problems that are learnable by non-interactive algorithms. See \citep{DanielyF19:open} for a more detailed discussion of this open problem. 



A significant limitation of our result is that it does not rule out even a $2$-round algorithm for learning halfspaces (or decision lists). This is, again, in contrast to the fact that learning algorithms for these classes require at least $d$ rounds of interaction.
We believe that extending our lower bounds to multiple-round algorithms and quantifying the tradeoff between the number of rounds and the complexity of learning is an important direction for future work.

\iffull

\printbibliography

\else
\bibliographystyle{plainnat}
\small{\bibliography{vf-allrefs-local,adaptive}}
\fi

\end{document}